\documentclass{article}

\usepackage{microtype}
\usepackage{graphicx}
\usepackage{subcaption}
\usepackage{booktabs} 

\usepackage{hyperref}

\usepackage[accepted]{icml2026}

\usepackage{amsmath}
\usepackage{amssymb}
\usepackage{mathtools}
\usepackage{amsthm}

\usepackage{multirow}

\usepackage{listings}
\usepackage{xcolor}
\lstdefinestyle{py}{
  language=Python,
  basicstyle=\ttfamily\small,
  keywordstyle=\color{blue},
  commentstyle=\color{gray!70},
  stringstyle=\color{teal},
  showstringspaces=false,
  breaklines=true,
  numbers=left, numberstyle=\tiny\color{gray},
}

\usepackage[capitalize,noabbrev]{cleveref}

\theoremstyle{plain}
\newtheorem{theorem}{Theorem}[section]

\newtheorem{corollary}[theorem]{Corollary}
\theoremstyle{definition}

\theoremstyle{remark}

\usepackage[textsize=tiny]{todonotes}

\icmltitlerunning{Critical Percolation as a Synthetic Data Model for Interpretability}

\begin{document}

\twocolumn[
  \icmltitle{Critical Percolation as a Synthetic Data Model for Interpretability}



  \icmlsetsymbol{equal}{*}

  \begin{icmlauthorlist}
      \icmlauthor{Aryeh Brill}{poi}
      \icmlauthor{Tom Ingebretsen Carlson}{poi}
  \end{icmlauthorlist}

  \icmlaffiliation{poi}{Principles of Intelligence}

  \icmlcorrespondingauthor{Aryeh Brill}{aryeh.brill@gmail.com}

  \icmlkeywords{Machine Learning, mechanistic interpretability, synthetic data, percolation theory, critical phenomena, random trees, additive coalescence, hierarchical structure, neural scaling laws, linear probes, fractals}
    
  \vskip 0.3in
]



\printAffiliationsAndNotice{}  

\begin{abstract}
    Neural networks learn features that reflect the hierarchical, multi-scale structure of natural data. Synthetic datasets used to evaluate interpretability methods typically lack this structure, limiting their value as realistic toy models. To close this gap, we introduce a family of synthetic datasets consisting of hierarchical functions defined on critical mean-field percolation clusters embedded in a high-dimensional data space. The percolation data consists of sparse, low-dimensional fractal clusters with a power-law size distribution. Latent variables modeling a taxonomic hierarchy generate each data point's target value. The data model is analytically tractable with known critical exponents that fix its properties without requiring hyperparameter tuning. We leverage a mapping between percolation clusters, random trees, and additive coalescence to propose an almost linear-time algorithm to jointly sample a random tree and its hierarchical latent decomposition, enabling data generation at arbitrary scale. Using probing experiments, we find that the model's ground-truth latent variables can be linearly decoded from neural network activations. Together, sparsity, self-similarity, power-law statistics, and analytical tractability make critical percolation a principled testbed for interpretability research.
\end{abstract}

\section{Introduction}\label{sec:introduction}

\begin{figure*}[h!]
    \centering
    \begin{subfigure}[b]{0.63\textwidth}
        \centering
        \includegraphics[width=\textwidth]{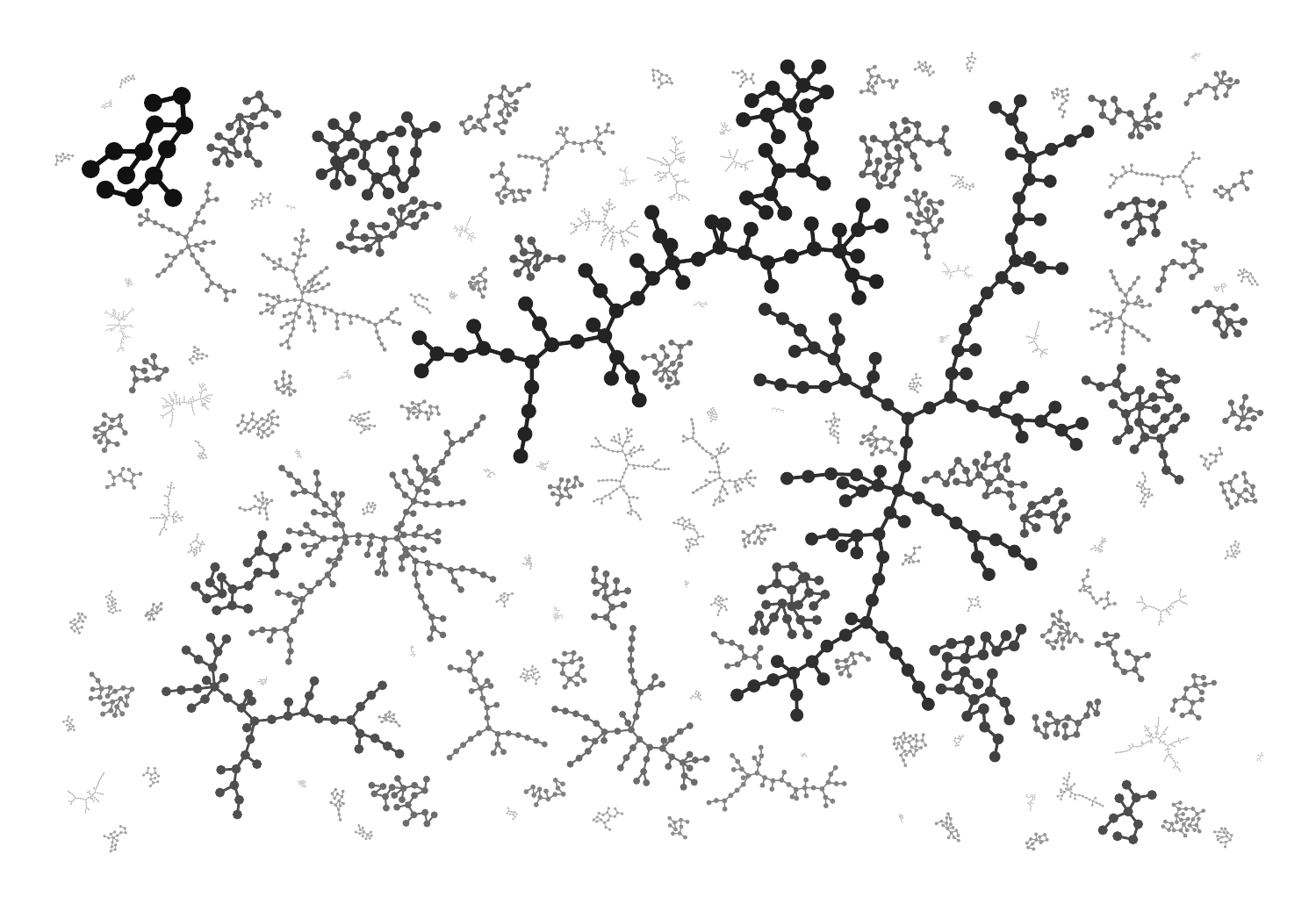}
        \caption{}
        \label{fig:data_model_illustration}
    \end{subfigure}
    \hfill
    \begin{subfigure}[b]{0.35\textwidth}
        \centering
        \includegraphics[width=\textwidth]{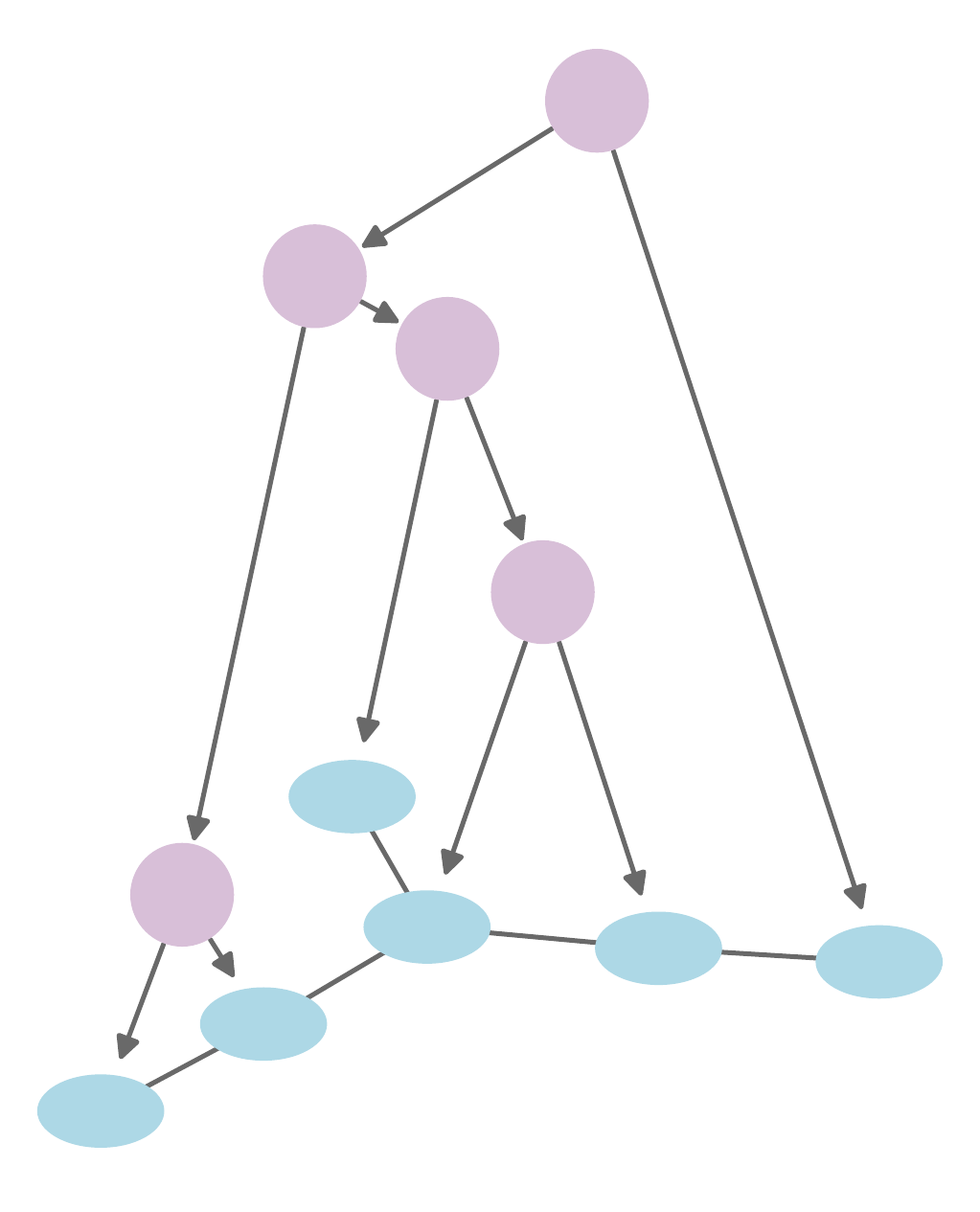}
        \caption{}
        \label{fig:hierarchy_illustration}
    \end{subfigure}
    \caption{The percolation data model. (\subref{fig:data_model_illustration}) Inputs are distributed as self-similar fractal clusters with power-law sizes. (\subref{fig:hierarchy_illustration}) Targets are generated by hierarchical latent variables decomposing each cluster.}
    \label{fig:figure1}
\end{figure*}

Deep neural networks have achieved extraordinary success across hard real-world tasks \citep{lecun2015deep, krizhevsky2012imagenet, brown2020language, jumper2021highly}. To efficiently achieve strong predictive performance, a learning system must represent features that match the relevant latent factors that explain the data \citep{bengio2013representation}. By developing tractable data models that capture common structural properties of natural data, we can better understand AI systems.

Data must have structure to be efficiently learnable, due to the curse of dimensionality \citep{bellman1961adaptive}. For example, approximating a generic $d$-dimensional continuous function with uniform error $\epsilon$ requires $O\left((1/\epsilon)^d\right)$ samples, which is intractable when $d$ is large. Since both humans and machines learn to perform real-world tasks, natural data distributions must be highly structured.

Researchers have invoked a variety of structural properties of data to explain why deep learning succeeds. One important perspective comes from mechanistic interpretability, which seeks to reverse-engineer the features and mechanisms learned by neural networks \citep{olah2020zoom, elhage2022toy, bereska2024mechanistic}. By decomposing dense representations into a large number of sparsely activating latent features, sparse autoencoders (SAEs) successfully extract interpretable features from the activations of large language models (LLMs) \citep{elhage2022toy, cunningham2023sparse, bricken2023monosemanticity, templeton2024scaling, gao2024scaling, lieberum2024gemma}. These methods succeed because the distribution of natural language tasks is \textit{sparse}: concepts relevant for prediction occur, and co-occur, rarely. The sparsity of learned features reflects the sparsity of data.

Besides sparsity, real-world concepts exhibit \textit{hierarchical} organization. Such hierarchies are either compositional, relating wholes to their parts, or taxonomic, relating classes to subclasses \citep{woods1975s, brachman1983and, miller1995wordnet}. In mechanistic interpretability research, hierarchical structure has been probed via the pathologies of SAEs. If data were purely sparse, increasing an SAE's size would simply recover additional interpretable latent features. In reality, previously coarse-grained latents split into fine-grained ones, a phenomenon known as feature splitting \citep{bricken2023monosemanticity}. Related issues are feature absorption and composition, in which at larger SAE sizes previously interpretable latents fragment into special cases or into compositional features, respectively \citep{chanin2024absorption, leask2025sparse}. These phenomena suggest that learned LLM features have hierarchical structure, reflecting the hierarchical organization of natural concepts. Indeed, SAE variants designed to recover hierarchically organized latents can achieve excellent reconstruction while recovering interpretable features at different levels of abstraction \citep{bussmann2025learning, costa2025flat}.

Compositional hierarchical structure in data can be modeled using a probabilistic context-free grammar (PCFG) \citep{allen2023physics, garnier2024transformers, lubana2024percolation, menon2025analyzing, cagnetta2024deep, cagnetta2024towards, cagnetta2025learning, sclocchi2025phase}. Recursive application of a PCFG’s production rules yields compositional hierarchical structure creating long-range correlations among the observed features or tokens. However, data models that describe taxonomic hierarchical structure have been comparatively less well studied.

Another influential idea is that natural datasets have low \textit{intrinsic dimension}. If all natural data samples are supported on a low-dimensional data manifold, representations can be highly compressed and the curse of dimensionality curtailed \citep{bengio2013representation, goldt2020modeling}. Many methods exist to estimate the intrinsic dimension of data embedded in a higher-dimensional ambient space \citep{grassberger1983measuring, levina2004maximum, facco2017estimating, binnie2025survey}.

Further insight comes from neural scaling laws, which suggest the presence of \textit{power laws} in data \citep{hestness2017deep, henighan2020scaling, kaplan2020scaling, hoffmann2022training}. Analyses based on high-dimensional regression typically model the kernel spectrum as a power law \citep{spigler2020asymptotic, bordelon2020spectrum, maloney2022solvable, bahri2024explaining, atanasov2024scaling}. Each of sparsity, hierarchy, and low intrinsic dimensionality can produce power-law scaling. If the distribution of sparse latent features or subtasks is heavy-tailed, learning them in order of frequency or importance translates into power-law scaling \citep{hutter2021learning, michaud2023quantization, nam2024exactly, pan2025understanding, brill2025model, michaud2025understanding}. Models of compositional hierarchical data predict data-limited scaling laws connected to the resolved context horizon
\citep{cagnetta2025learning, cagnetta2026deriving}. Finally, low intrinsic dimension naturally yields efficient power-law scaling with exponent $1/D$, where $D$ is the intrinsic dimension $D \ll d$ \citep{spigler2020asymptotic, sharma2022scaling, bahri2024explaining}.

A fractal model unites all of these properties. Fractal geometry is ubiquitous in nature \citep{mandelbrot1983fractal}. Fractals can be disconnected, modeling sparsity. Fractals are typically \textit{self-similar}, with the same properties at all scales, yielding hierarchical organization. A fractal fills space with a dimension distinct from both its topological dimension and its embedding dimension, reflecting intrinsically low dimensionality. Self-similar fractals are scale-free, naturally giving rise to power laws. Fractality reflects the intuition that nature's data-generating process is essentially inexhaustible. The more one learns, the more detailed distinctions one can make, and there's always more to learn.

Synthetic data models with realistic, principled structure are valuable for mechanistic interpretability research. Synthetic datasets provide ground-truth latent features, allowing researchers to validate and improve interpretability tools. Much progress has been spurred by synthetic data models with heavy-tailed sparse features \citep{elhage2022toy} and hierarchically organized features \citep{chanin2024absorption, bussmann2025learning, costa2025flat}. \citet{chanin2026synthsaebench} propose a synthetic benchmark model of neural activations that incorporates feature sparsity and hierarchy, as well as superposition and correlation. These synthetic datasets model the empirical characteristics of neural activations. To ground interpretability research in meaningful properties of the data itself, principled data models are needed. 

We introduce a synthetic data model based on critical mean-field percolation theory \citep{stauffer2018introduction}. We build on a model proposed by \citet{brill2024neural, brill2025representation}. The percolation model describes data that has sparsity, taxonomic hierarchy, power laws, low intrinsic dimension, and self-similarity. The data model is very simple, consisting of clusters of randomly occupied units on a high-dimensional lattice. Despite this simplicity, the model possesses rich structure, including critical phenomena, self-similarity, and fractal geometry. Figure~\ref{fig:figure1} illustrates the percolation model.

This work presents several contributions. First, we introduce an explicitly hierarchical construction of the percolation data, revealing its innate self-similarity by constructing target values in accordance with latent variables organized in binary trees. Second, we highlight a mapping between critical mean-field percolation clusters, random labeled trees \citep{moon1970counting}, and the additive coalescent process \citep{ aldous1998standard, pitman1999coalescent}, and leverage this mapping to propose a \textit{cyclic coalescent} algorithm to jointly sample a random tree and its hierarchical latent decomposition in almost linear $O(n~\alpha(n))$ time. Third, we develop tools to generate synthetic hierarchical percolation datasets using the cyclic coalescent. Fourth, we empirically investigate representations of neural networks trained on synthetic percolation data via probing experiments.\footnote{The code for synthetic data generation is available \href{https://github.com/aribrill/percolation-synthetic-data/tree/3dd47a808f02d161a34615570814f107ab1c1627}{here}. The code for our interpretability experiments is available \href{https://github.com/tomingebretsencarlson/mechinterp-percolation/tree/a5fd5846350c68ef5528374cbcfce266ed8e622d}{here}.}

\begin{figure}
    \centering
    \includegraphics[width=0.8\linewidth]{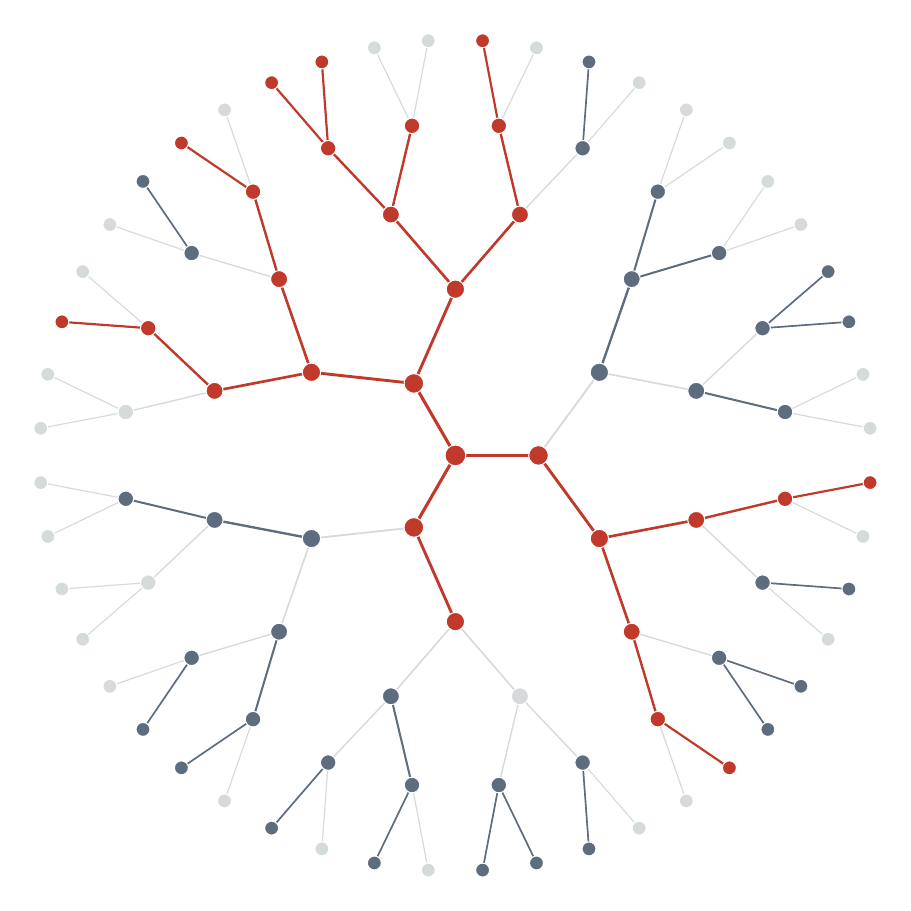}
    \caption{Bethe lattice percolation. Red: infinite cluster. Blue: finite clusters.}
    \label{fig:bethe_lattice}
\end{figure}

In Sec.~\ref{sec:model}, we introduce the percolation data model. In Sec.~\ref{sec:algorithm}, we propose the cyclic coalescent algorithm. In Sec.~\ref{sec:generating_synthetic_data}, we describe a procedure to generate synthetic data following the percolation model. We describe our experiments in Sec.~\ref{sec:experiments} and conclude with discussion in Sec.~\ref{sec:discussion}.

\section{Percolation Model}\label{sec:model}

\subsection{Percolation theory}

The branch of statistical physics concerned with the properties of clusters of randomly occupied sites or bonds on a lattice is called percolation theory \citep{stauffer2018introduction}. In this framework, sites or bonds on a specified lattice are occupied independently at random with probability $p$, forming contiguous clusters.\footnote{Appendix~\ref{appendix:terminology} describes field-specific terms used in this paper.} We consider a hypercubic lattice of dimension $d$ and scale $L$. Percolation's most notable property is a phase transition. Above a critical occupation probability $p=p_c$, the system \textit{percolates}: an infinite cluster emerges that scales with the system size. Below this transition, only finite clusters exist.

Above an upper critical dimension believed to equal $d_u=6$, percolation enters an exactly solvable mean-field regime. A random path in high-dimensional space is vanishingly unlikely to self-intersect, so percolation clusters are cycle-free, and the lattice can be approximated as a tree. Specifically, mean-field percolation is modeled using the Bethe lattice, an infinite tree in which all nodes have equal degree $z$. Fig.~\ref{fig:bethe_lattice} illustrates this with $z=3$. The percolation threshold in this setting is $p_c = 1/(z-1)$. Using the Bethe lattice to approximate a hypercubic lattice gives $z=2d$ and $p_c = 1/(2d-1)$. For mean-field percolation, site and bond percolation are essentially equivalent.

Percolation clusters have interesting statistical and geometric properties \citep{stauffer2018introduction}. Let $s \ge 1$ denote a cluster's size. Close to criticality for $s \gg 1$, mean-field percolation clusters have a size distribution $n_s(p) = s^{-\tau} e^{-cs}$, where $\tau = 5/2$, $c \propto (p_c - p)^{1/\sigma}$, and $\sigma = 1/2$. Exactly at criticality with $p = p_c$, the cluster size distribution becomes a pure power law. Geometrically, percolation clusters are fractal objects with fractal dimension $D < d$. A critical mean-field percolation cluster has $D = 4$ and can be visualized as a tree embedded as a branching random walk on a lattice.

\subsection{Data model}\label{sec:data_model}

Our data model is based on site percolation. Let $\mathcal{X} = [0,1)^d \subset \mathbb{R}^d$ denote the $d$-dimensional data space from which potential data points $\mathbf{x} \in \mathcal{X}$ are drawn. We fix the unit cube without loss of generality. We discretize $\mathcal{X}$ as a hypercubic lattice of linear size $L$ and spacing $1/L$,

\begin{equation}
    \Lambda = \tfrac{1}{L}\mathbb{Z}^d \cap \mathcal{X}.
\end{equation}

Each site of $\Lambda$ is independently in-distribution with probability $p \in [0,1]$, giving a random subset $\mathcal{S} \subseteq \Lambda$ of expected size $\mathbb{E}|\mathcal{S}| = p L^d$. The continuum limit is recovered as $L \to \infty$. An empirical dataset $\mathcal{D} = \{\mathbf{x}_1, \ldots, \mathbf{x}_{|\mathcal{D}|}\}$ is then drawn i.i.d. uniformly from $\mathcal{S}$.

The above model was previously described by \citet{brill2024neural}. In that work, the target function was modeled as a set of random continuous functions, each supported on a separate cluster in $\mathcal{S}$. We now simplify and augment that picture by making three crucial assumptions.

\begin{itemize}
    \item \textbf{Mean-field}: We assume $d \gg d_u$ so that the Bethe lattice model applies. High-dimensional inputs are a realistic assumption for natural data.
    \item \textbf{Criticality}: We set $p = p_c$. This regime may enable both useful and efficient learning \citep{brill2024neural}. Combined with the mean-field assumption, this makes the model analytically tractable.
    \item \textbf{Hierarchy}: We constrain targets to follow a simple hierarchical construction, stated next.
\end{itemize}

We generate target values hierarchically. Let $\mathcal{F}$ be a binary forest of latent variables. Initialize it as the trivial forest with one leaf $z_\mathbf{x}$ associated with each site $\mathbf{x} \in \mathcal{S}$. As a construction device, we activate the bonds between each pair of adjacent sites in $\mathcal{S}$ sequentially in a uniformly random order. However, only a partial order on bonds recorded by $\mathcal{F}$ plays any further role.

The mean-field assumption ensures that every bond connects two distinct clusters $\mathcal{C}_1$ and $\mathcal{C}_2$. For each bond, we introduce a fresh latent, associate it with each
site in $\mathcal{C}_1 \cup \mathcal{C}_2$, and make it the parent of the existing topmost latents of $\mathcal{C}_1$ and $\mathcal{C}_2$. Writing $\mathcal{Z}_\mathbf{x}$
for the top-down path of latents in $\mathcal{F}$ associated with site $\mathbf{x}$, the target at $\mathbf{x}$ is, for some function $f$,

\begin{equation}\label{eq:target_defn}
    y_\mathbf{x} = f(\mathcal{Z}_\mathbf{x}).
\end{equation}

\subsection{Model properties}

The percolation model is quite simple, but it has rich structure matching properties associated with natural data distributions. Mean-field percolation is analytically tractable and well studied, and its statistical and geometric properties are textbook results \citep{stauffer2018introduction}. At criticality, the data consist of sparse clusters power-law-distributed with exponent $5/2$. Furthermore, clusters are low-dimensional fractals with intrinsic dimension $D=4$. \citet{brill2024neural} analyzes the scaling laws that result from the tradeoff between these two competing sources of power-law structure. The graph of each percolation cluster has degree distribution $k \sim 1 + \text{Binomial}(z-1,~p)$. For large $z \sim 2d$ and $p = p_c$, we make the Poisson approximation $k \sim 1 + \text{Poisson}(1)$.

Each cluster forms a functional equivalence class, in the sense that any subset of its inputs shares at least one latent. The construction is statistically self-similar. Decomposing the latent forest $\mathcal{F}$ splits a cluster into identically distributed subclusters. The structure of $\mathcal{F}$ implicitly records information about each latent. In particular, for a given latent, we define its size $s_\mathrm{latent}$ as the number of leaf nodes at or below that latent; its cluster size $s_\mathrm{cluster}$ as the number of leaf nodes  at or below the root of that latent's tree; and its depth $d_\mathrm{latent}$ as the edge length of the path to the latent from its tree's root. The latents in $\mathcal{F}$ serve multiple roles. Because they correspond to bonds in data space, they indicate that merged subclusters share similar input features. For the same reason, they identify split points between subclusters. Finally, they provide latent variables used to compute targets. 

The latent tree $\mathcal{F}$ records hierarchical relations among subsets of inputs, thereby describing taxonomic hierarchical structure. Hierarchical latent variables associated with subclusters may provide a model for hierarchical context features identified in LLMs \citep{gurnee2023finding, brill2025representation}. We discuss taxonomic and compositional hierarchical structure further in Sec.~\ref{sec:discussion}.

\section{Cyclic Coalescent}\label{sec:algorithm}

Directly simulating high-dimensional percolation is intractable. Instead, efficient methods to construct a cluster grow it from a starting site \citep{leath1976cluster} or use invasion percolation \citep{mertens2018percolation}. However, these methods do not explicitly represent self-similar hierarchical structure. Furthermore, our purpose is not to experimentally probe percolation itself, but to apply the critical mean-field regime as a data model. To that end, we leverage equivalences between percolation and several other mathematical constructions, using these mappings to develop an algorithm that efficiently generates clusters with a hierarchical decomposition equivalent to the model in Sec.~\ref{sec:data_model}.

A mean-field percolation cluster can be thought of as a size-conditioned Galton-Watson branching process, and mapped to a Brownian excursion \citep{roch_mdp_2024, aldous1993continuum, font2016percolation}. These mappings imply that critical mean-field percolation clusters have the same distribution as uniform random labeled trees \citep{aldous1991continuum1, aldous1991continuum2}, which are well-studied combinatorial objects \citep{moon1970counting}. In particular, a deep connection exists between random trees and a merger process known as additive coalescence \citep{aldous1998standard, pitman1999coalescent, Aldous1999coalescence}.

\begin{figure*}[tb]
    \centering
    \includegraphics[width=0.6\linewidth]{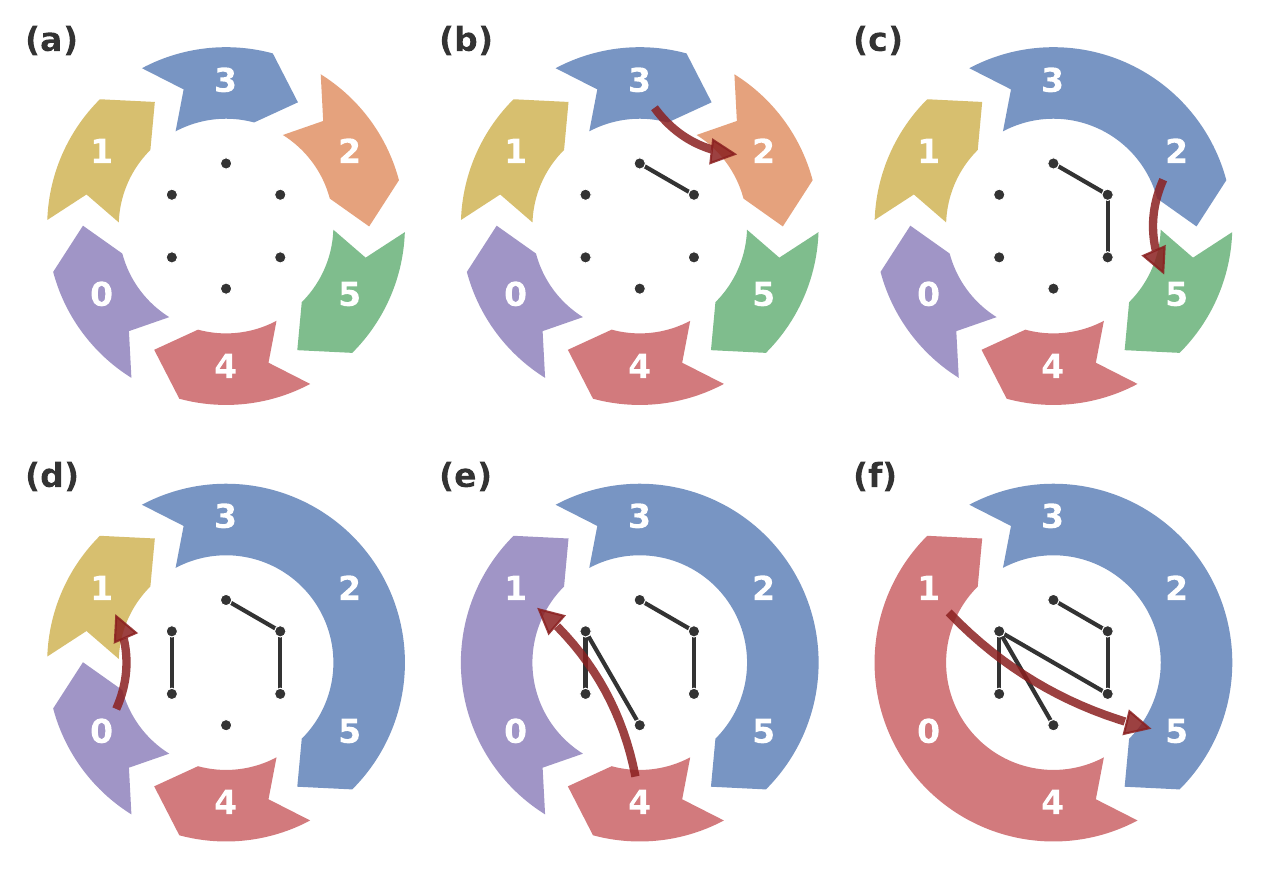}
    \caption{Example run of the cyclic coalescent. (a) Initialize nodes in a random cycle. (b--f) At each step, connect a random node to a random node in its block's successor, merging those blocks.}
    \label{fig:cyclic_coalescent}
\end{figure*}

In an elegant construction, \citet{pitman1999coalescent} showed that the time-reversal of deleting a random tree's edges uniformly at random is equivalent to merging a forest of smaller trees in an additive coalescent process. At each step, two randomly chosen trees merge with a probability proportional to their total size. Our method draws heavily on \citet{pitman1999coalescent}, and we review essential results from that work in detail in Appendix~\ref{sec:theorems_prereqs}. In each step of Pitman's construction, out of $k$ remaining trees with total size $n$, a pair of trees $T_i$ and $T_j$ is picked to merge with probability

\begin{equation}
    P(i, j) = \frac{n_i + n_j}{n(k-1)},
\end{equation}

where $n_i$ and $n_j$ are those tree's sizes. Next, a node is picked independently and uniformly at random from each of $T_i$ and $T_j$, and an edge added between those nodes. Amazingly, Theorem~\ref{thm:pitman_theorem5} states that this process is exactly equivalent to uniform random edge insertion in a random tree. What's more, it is equivalent to the hierarchical procedure described in Sec.~\ref{sec:data_model}, allowing us to repurpose coalescence as a procedure to build up a hierarchy of latent variables.

We propose a variant of Pitman's construction called the \textit{cyclic coalescent}. As with the standard coalescent construction, the cyclic coalescent uniformly samples random trees. Furthermore, it admits a simple and highly efficient algorithmic implementation that runs in almost linear $O(n~\alpha(n))$ time. Our construction is based on two key ideas. First, picking a node uniformly at random plays double duty. One random draw simultaneously picks a tree with a probability that depends on its size, and picks a node from it uniformly at random. Second, we arrange the trees in a fixed random cyclic order. Rather than allowing unrestricted merges among all trees, the chosen tree simply merges with its successor. The initial random cyclic order ensures the distribution's uniformity. Fig.~\ref{fig:cyclic_coalescent} illustrates the cyclic coalescent's operation. We state the construction formally and prove its correctness in Appendix~\ref{sec:theorems_cyclic}.

We implement the cyclic coalescent using a union-find data structure \citep{tarjan1975efficiency}.  Pseudocode is shown in Algorithm~\ref{alg:cyclic-coalescent}. A union-find algorithm efficiently maintains a partition of subsets, supporting a \texttt{find} operation to identify an element's subset and a \texttt{union} operation to merge two subsets. Union-find algorithms have wide application for graph problems, including for studying site and bond percolation \citep{newman2001fast}. Our implementation uses union by rank and path halving \citep{tarjan1984worst}. Each union-find iteration takes amortized $O(\alpha(n))$ time, where $\alpha$ is the extremely slow-growing inverse Ackermann function and is effectively constant for physically realizable $n$. Initialization is $O(n)$ and the algorithm runs for $n - 1$ iterations, giving $O(n~\alpha(n))$ total time complexity.

\begin{algorithm}[tb]
   \caption{\emph{Cyclic coalescent}: $O(n\,\alpha(n))$ uniform tree sampler with coalescent decomposition}
   \label{alg:cyclic-coalescent}
\begin{algorithmic}
   \STATE {\bfseries Input:} Number of nodes $n$
   \STATE {\bfseries Output:} Uniform random labeled tree $G$ on $\{1,\ldots,n\}$ and its rooted binary merger tree $B$
   \STATE {\bfseries State:} Blocks form a cyclic order maintained by union-find supporting block lookup, $B$-root lookup, cyclic-successor query, and merge-with-successor in amortized $O(\alpha(n))$ time.
   \STATE Arrange $1,\ldots,n$ in a uniformly random cyclic order, with each label its own block.
   \STATE Initialize $B$ as the forest of leaves $1,\ldots,n$.
   \FOR{$k=1$ {\bfseries to} $n-1$}
   \STATE Draw $u \sim \mathrm{Unif}(\{1,\ldots,n\})$; let $A$ be the block of $u$ and $A'$ its cyclic successor.
   \STATE Draw $v \sim \mathrm{Unif}(A')$ independently, and add edge $\{u,v\}$ to $G$.
   \STATE Let $r_A, r_{A'}$ be the $B$-roots of $A, A'$; create a new internal node in $B$ with children $r_A, r_{A'}$.
   \STATE Merge $A$ into $A'$, and assign the new internal node as the $B$-root of the merged block.
   \ENDFOR
   \STATE {\bfseries return} $G,\ B$
\end{algorithmic}
\end{algorithm}

\section{Synthetic Dataset}\label{sec:generating_synthetic_data}

\textbf{Cluster distribution.}\quad We use an iterative preferential-attachment procedure to generate clusters with a power-law size distribution. Each step increments the dataset size by one. At each step, either a new cluster is created with probability $p_{01}$, or else an existing cluster is chosen with probability proportional to its size $s$. That cluster's size is then increased to $s + 1$. To obtain a distribution consistent with critical mean-field percolation, we set $p_{01} = 1/3$, which we derive in Appendix~\ref{appendix:cluster_statistics}. This iterative procedure induces a self-consistent size-independent ordering among data points for a specified random seed, although we do not rely on this property for our experiments in this work.

\textbf{Cluster geometry.}\quad For each cluster, we use the cyclic coalescent to generate a random tree and an associated hierarchical latent decomposition. We do this independently for each cluster in the dataset. This yields a forest of data points and a corresponding forest of rooted binary latent trees.

\textbf{Embeddings.}\quad After generating the graph structures of all clusters, the nodes are embedded as data points in a $d$-dimensional vector space. The dimension $d$ is a hyperparameter that can be set arbitrarily. Each cluster is embedded by first randomly choosing a root node and then iteratively embedding each node's neighbors following a branching random walk. The random choice of the root node and the random generation of the step direction at each node when embedding the graph are both performed hierarchically, following the cluster's latent tree. This embedding procedure yields clusters that have consistent geometrical structure regardless of the number of iterations used in the generation process.

\textbf{Targets.}\quad We specialize Eq.~\ref{eq:target_defn} to a regression task that is a linear function of the latent variables. Specifically, we assign each latent variable $z_i \in \mathcal{F}$ a value $z_i \sim \mathcal{N}(0, 1)$. We record each latent's value and depth. For each point $\mathbf{x}$, we compute its target as the normalized sum of its associated latents' values,

\begin{equation}
        y_\mathbf{x} = \frac{1}{\sqrt{1 + d_\mathbf{x}}} \sum_{i=0}^{d_\mathbf{x}} z_i^{(\mathbf{x})},
    \label{eq:leaf_value}
\end{equation}

where $z_i^{(\mathbf{x})}$ is the value of the latent at depth $i$ and $d_\mathbf{x}$ is the depth of point $\mathbf{x}$.

Each latent has an expected contribution to the mean-squared error (MSE) proportional to its  number of associated data points, given by its size $s_\mathrm{latent}$, divided by the expected number of latents per point. Theorem~\ref{thm:pitman_theorem5} states a bijection between the time-reversal of edge deletion in a random tree and an additive coalescent process, which implies that the number of merges a random node undergoes equals the number of cuts needed to isolate a node in a random tree. This has expectation $\sqrt{\pi s_\mathrm{cluster} / 2}$ \citep{moon1970counting}. We therefore stratify latents by the figure of merit $\mathrm{FOM} \equiv s_\mathrm{latent} / \sqrt{s_\mathrm{cluster}}$.

\section{Experiments}\label{sec:experiments}

\textbf{Data generation.}\quad We generated two datasets as described in Section~\ref{sec:generating_synthetic_data}. The \textit{one-cluster} dataset consists of a single percolation cluster with $2\times10^5$ data points. This dataset allows us to study the cluster properties in isolation. Second, a \textit{multi-cluster} dataset with $2\times10^6$ data points was generated using the full percolation model. The multi-cluster dataset was filtered to drop clusters with fewer than 500 data points, to ensure that each cluster had sufficient data points for training and validation. Both datasets were stored as input embeddings $\mathbf{X} \in \mathbb{R}^{n \times d}$ and scalar labels $\mathbf{y} \in \mathbb{R}^n$, along with the ground-truth latent features. Full dataset hyperparameters are given in Table~\ref{tab:data_hyperparameters} in Appendix~\ref{appendix:hyperparameters_data}.

\begin{figure*}[htbp]
    \centering
    \begin{subfigure}[b]{0.45\textwidth}
        \centering
        \includegraphics[trim={0.2cm 0.3cm 2cm 2cm}, clip, width=\textwidth]{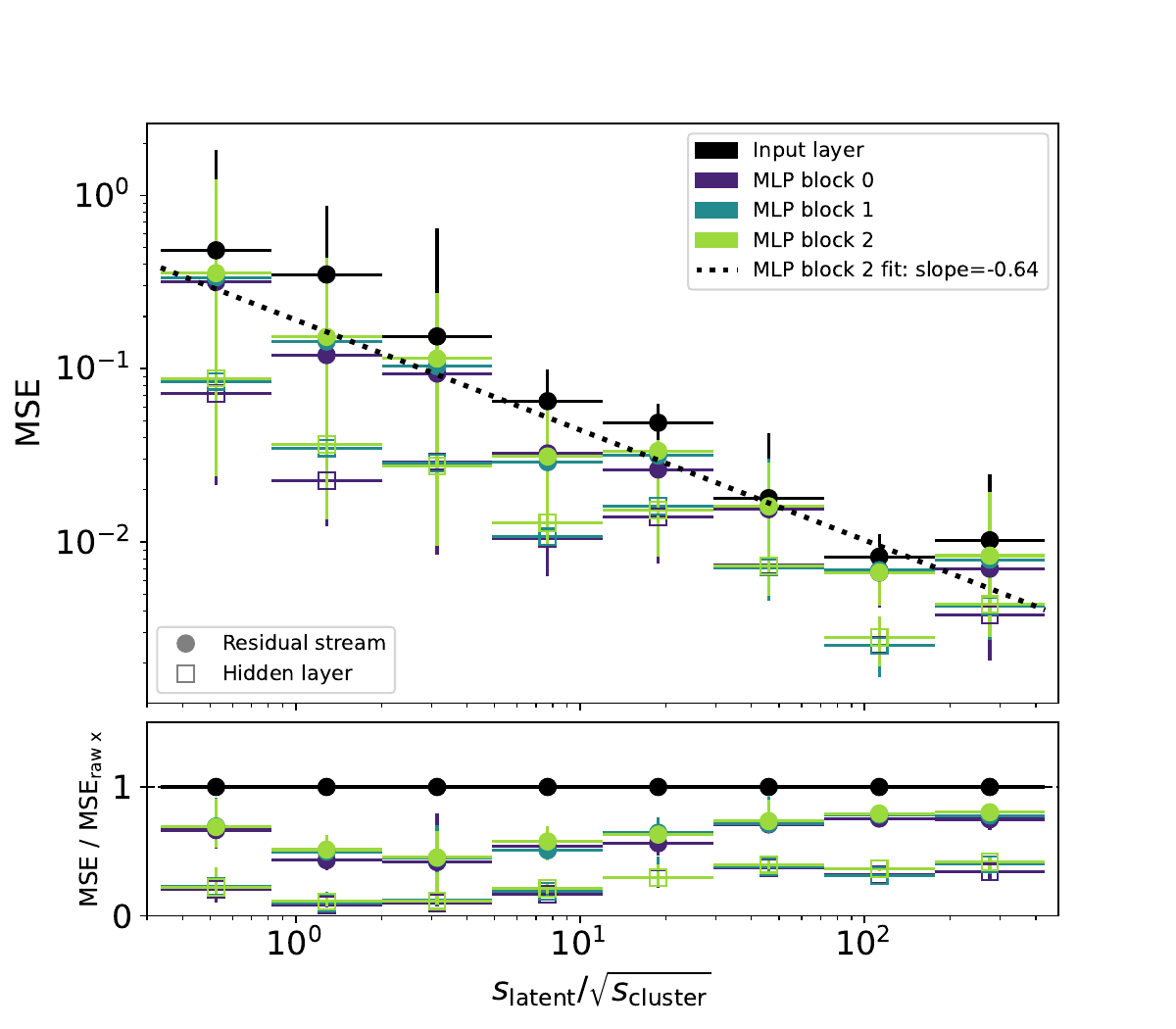}
        \caption{}
        \label{subfig:1clusterMSE_zu}
    \end{subfigure}
    \hfill
    \begin{subfigure}[b]{0.45\textwidth}
        \centering
        \includegraphics[trim={0.2cm 0.3cm 2cm 2cm}, clip, width=\textwidth]{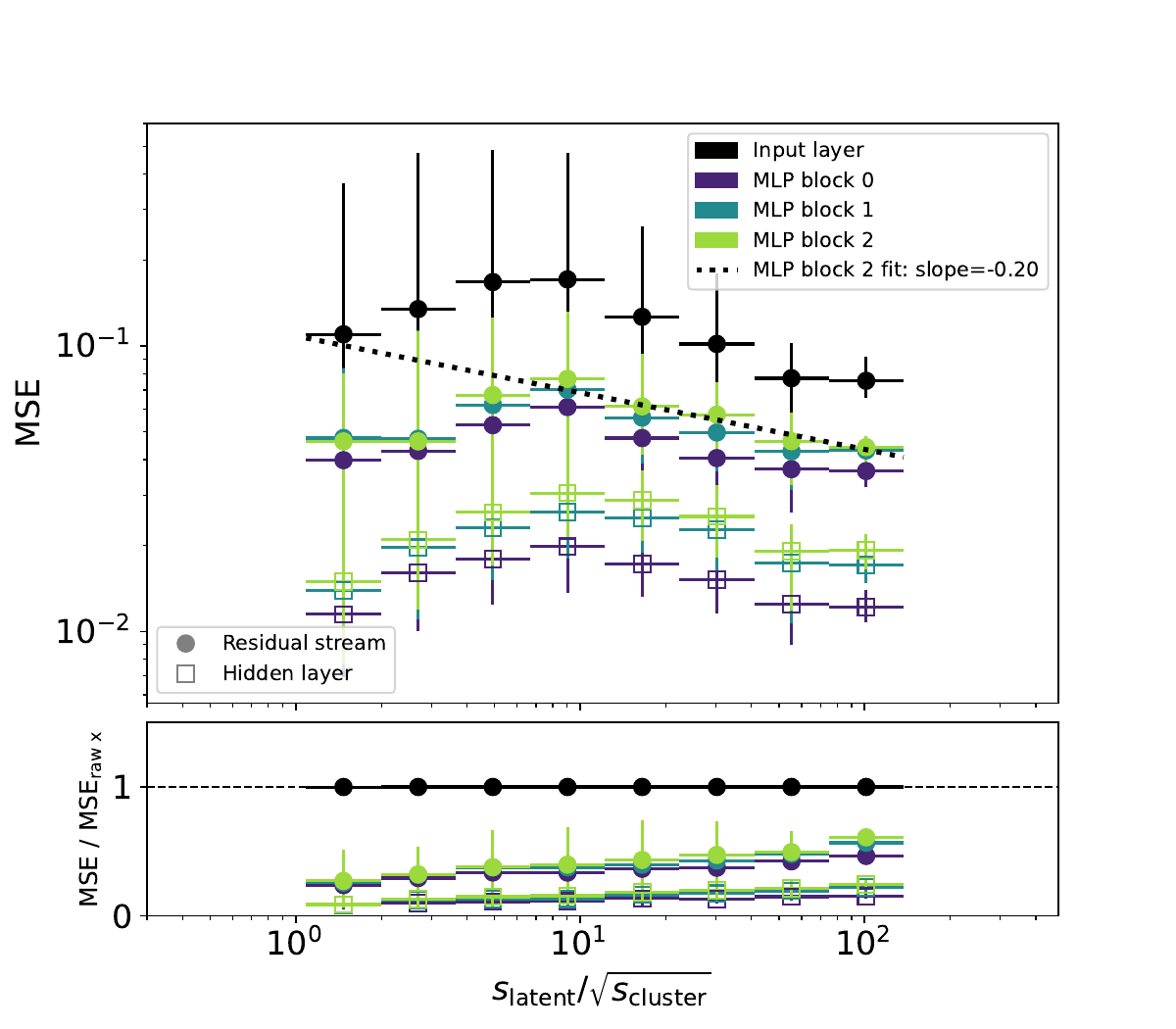}
        \caption{}
        \label{subfig:multiclusterMSE_zu}
    \end{subfigure}
    \caption{Per-latent linear probe performance for individual latent values $z_i$, showing (a) the one-cluster dataset and (b) the multi-cluster dataset. Dots (squares) mark the median MSE for the residual stream (hidden activations). Error bars show 25th and 75th percentiles. The ratio panels show each layer's per-latent MSE normalized by the MSE of probes trained on the raw input.}
    \label{fig:zu_MSE_probes}
\end{figure*}

\textbf{Neural network training.}\quad We trained a residual multilayer perceptron (MLP) on the percolation data. This architecture was chosen to resemble a transformer feed-forward block. Our MLP consists of a linear input projection, $L$ MLP blocks, and a linear output layer. Each MLP block expands by a factor of 4, applies a ReLU activation, and projects back to $d_\text{model}$ with a residual connection,

\begin{equation}
    \mathbf{x}' = \mathbf{x} + W_2 \sigma(W_1 \mathbf{x} + \mathbf{b}_1) + \mathbf{b}_2,
    \label{eq:mlp_block}
\end{equation}

where $\mathbf{x}$ and $\mathbf{x}'$ are the MLP block's input and output, $W_1$ and $W_2$ are weight matrices, $\mathbf{b}_1$ and $\mathbf{b}_2$ are bias vectors, and $\sigma$ denotes the ReLU activation. For the one-cluster dataset we used $d_\text{model}=256$ and $L=3$ blocks, and for the multi-cluster dataset we used $d_\text{model}=512$ and $L=3$ blocks. Both models were trained using the AdamW optimizer \citep{loshchilov2017decoupled} and a cosine learning rate decay schedule. Full hyperparameters are provided in Table~\ref{tab:mlp_hyperparameters} in Appendix \ref{appendix:hyperparameters_model}.

\textbf{Model performance.}\quad We report MLP performance against two simple baselines, shown in Table~\ref{tab:performance}. Ridge regression performs significantly worse, confirming the task's nonlinearity. We also compute 1-nearest-neighbor (1NN) regression using the full dataset to estimate the best practically achievable performance. The MLP performs comparably to this best-case estimate. 

\begin{table}
\centering
\caption{Model performance.}
\label{tab:performance}
\begin{tabular}{clcc}
\hline
\textbf{Metric} & \textbf{Model} & \textbf{One-cluster} & \textbf{Multi-cluster} \\
\hline
\multirow{3}{*}{$R^2$} & Ridge & 0.51\phantom{0}  & 0.39 \\
                       & 1NN   & 0.94\phantom{0}  & 0.86 \\
                       & MLP   & 0.94\phantom{0}  & 0.88 \\
\hline
\multirow{3}{*}{MSE}   & Ridge & 0.20\phantom{0}  & 0.59 \\
                       & 1NN   & 0.024            & 0.13 \\
                       & MLP   & 0.026            & 0.12 \\
\hline
\end{tabular}
\end{table}

\textbf{Linear probes.}\quad We trained linear probes \citep{alain2016understanding} to assess if MLPs trained to solve the percolation task linearly represent the ground-truth latent variables $z_i$ defined in Eq.~\ref{eq:leaf_value}. Separate probes were trained to regress the values of all latents at each given depth, using all training points of larger depth. Stratifying by depth fixes a unique latent for each point. We excluded latents with $s_\mathrm{latent} < 150$ and their corresponding points to have enough samples for training and validation.

Linear probes were fit from the activations to the latent value $z_i$. We also fit probes to the partial cumulative sum descending the latent path to $z_i$, with similar results, shown in Appendix~\ref{appendix:additional_experiments}. Probes were computed for depths ${0, 5, 10, \ldots, 200}$ on the raw input and at each network layer. Probe performance was evaluated using the per-latent MSE, computed separately on the validation set for each latent.

\begin{figure}
    \centering
    \includegraphics[trim={0.2cm 0.2cm 0.2cm 0.2cm}, clip, width=0.43\textwidth]{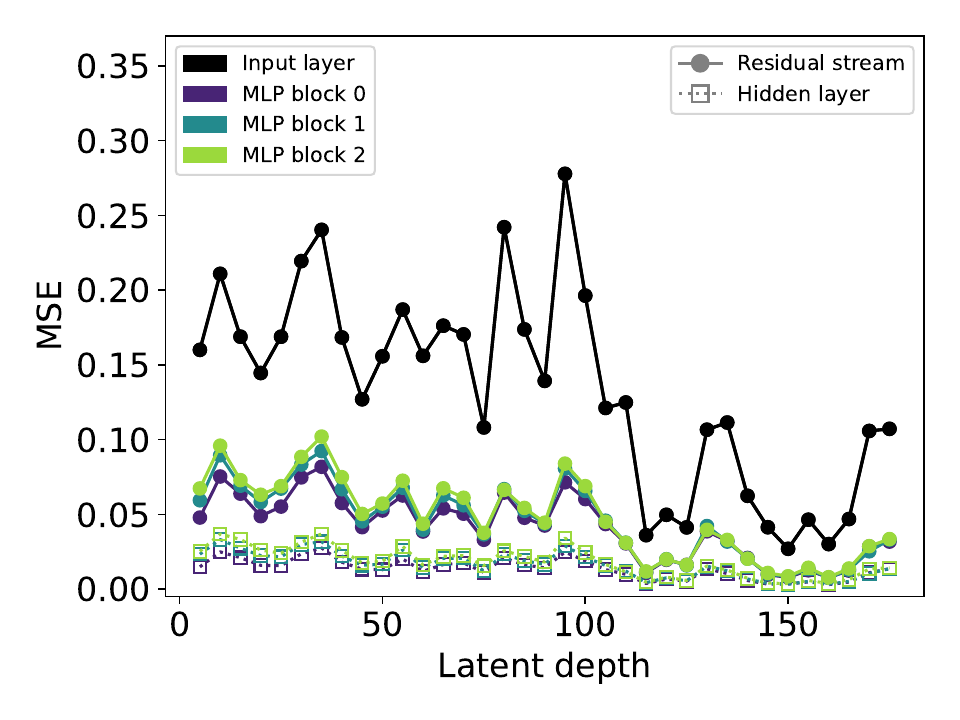}
    \caption{Linear probe recovery of latent values $z_{i}$ for the one-cluster dataset, stratified by latent depth.}
    \label{fig:globalmse}
\end{figure}

\section{Discussion}\label{sec:discussion}

\textbf{Linear representations.}\quad Fig.~\ref{fig:zu_MSE_probes} shows the probe results. The ground-truth latent variables can be linearly decoded from the network's activations. Per-latent probes trained on MLP activations have consistently lower MSE than on the raw input. The hidden layer activations have lower probe MSE than the residual stream. The ratio plots show that the probe performance gap compared to raw input decreases with increasing FOM. We hypothesize that latents representing coarser subtrees may be easier to decode from the input geometry, requiring less nonlinear computation.

\textbf{Power-law trends.}\quad The per-latent MSE exhibits an apparent power-law trend as a function of FOM. We visualize this with power-law fits in Fig.~\ref{fig:zu_MSE_probes}. In Fig.~\ref{subfig:multiclusterMSE_zu}, the MSE drop at low FOM is an artifact of the minimum cluster size filter, which removes the smallest clusters and suppresses the low-FOM bins. Fig.~\ref{fig:globalmse} shows the probe MSE as a function of latent depth. No power-law trend is visible, suggesting that FOM captures the latents' statistical importance more effectively than depth alone. For depths above 100, the global MSE decreases due to a selection bias, as only the few largest clusters have the deepest latents.

\textbf{No layerwise progression.}\quad The probe MSE is very similar for all MLP blocks, with the possible exception that the MSE appears sightly lower for MLP block 0 for multi-cluster data in Fig.~\ref{subfig:multiclusterMSE_zu}. Furthermore, no clear pattern is observed for any FOM bin across layers. This lack of trend suggests that the network may not represent latents following an interpretable organization across layers, such as building up a hierarchical sum. This hypothesis is supported by the lack of layerwise trend in the cumulative latent sums, shown in Fig.~\ref{fig:sum_zu_MSE_probes} in Appendix~\ref{appendix:additional_experiments}.

\textbf{Mechanistic interpretability.}\quad This work develops a synthetic data model meant to complement empirical studies interpreting LLMs. Because the percolation model has ground-truth hierarchical latent structure, future work could test its validity as a useful data model by investigating whether it causes analogs of SAE pathologies such as feature splitting and absorption \citep{bricken2023monosemanticity, chanin2024absorption}. In addition, the relationship between data structure and structure in neural activations is poorly understood. A fruitful avenue of further research may be to investigate whether percolation data produces activation structure with features similar to synthetic models \citep{chanin2026synthsaebench}. We discuss additional related work in Appendix~\ref{appendix:related_work}.

\textbf{Hierarchy.}\quad The percolation model's latent forest $\mathcal{F}$ describes a taxonomic hierarchy of set inclusion relations among samples. This is a hierarchy distinct from, and complementary to, composition among features, which has been described using synthetic data models \citep[e.g.][]{cagnetta2024deep}. Table~\ref{tab:hierarchy} and Fig.~\ref{fig:hierarchy_diagram} compare taxonomic and compositional hierarchical relations. In addition, the percolation model naturally maintains an explicit instance/class distinction within a taxonomic hierarchy \citep{brachman1983and}. The latents represent classes and data points are instances.

\begin{figure}
    \centering
    \includegraphics[trim={2.0cm 0.0cm 2.0cm 0.0cm}, clip, width=0.95\linewidth]{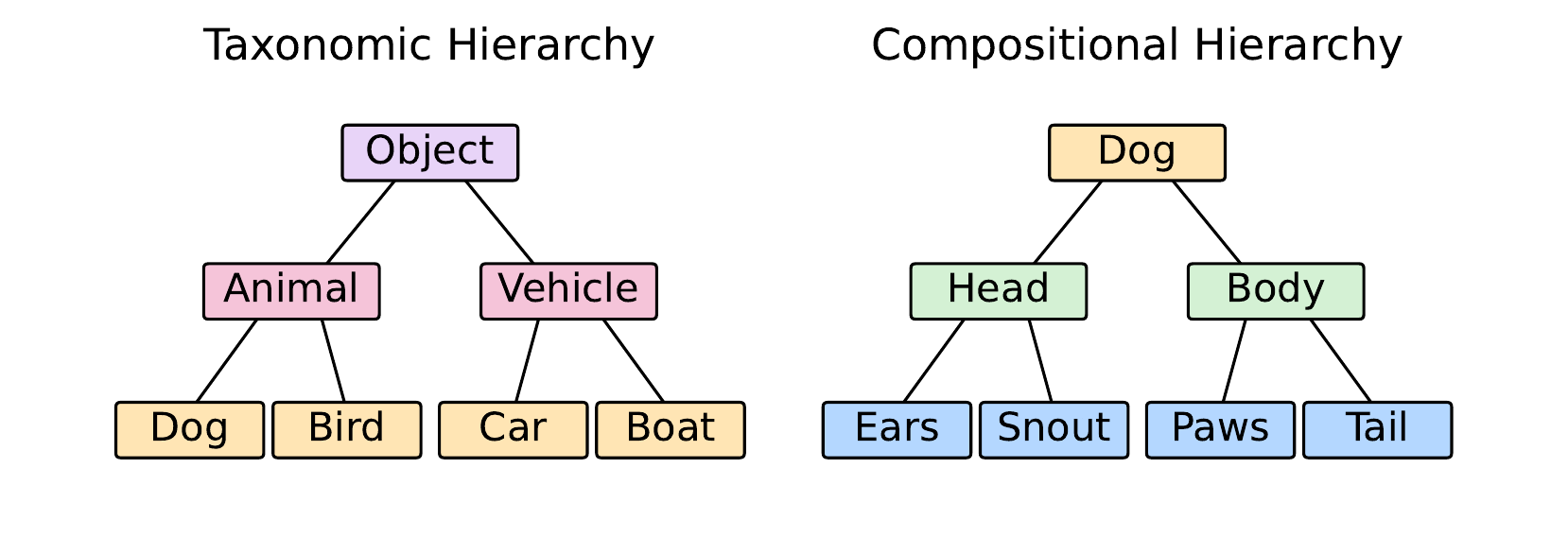}
    \caption{A comparison of taxonomic and compositional hierarchical structures.}
    \label{fig:hierarchy_diagram}
\end{figure}

\begin{table}
\centering
\caption{Hierarchical relations.}
\label{tab:hierarchy}
\begin{tabular}{ccc}
\hline
& \textbf{Taxonomic} & \textbf{Compositional} \\
\hline
Unit & sample & feature\\
Description & set inclusion & part-whole\\
Relation & IS-A & HAS-A\\
Semantics & hyponymy & meronymy\\
\hline
\end{tabular}
\end{table}

\textbf{Limitations.}\quad The correlational probing experiments in this work serve as a proof of concept for investigating the percolation model as an interpretability testbed. Causal interventions are needed to validate specific interpretability hypotheses \citep{meng2022locating, chan2022causal}.

The percolation model lacks many features of real data. It assumes that the data distribution is populated at random. It does not model compositional relationships among input features. The synthetic dataset we present in this work supports vectorized data with scalar regression labels only. Extending the model to support tokenized data and classification tasks is a direction for future work.

Finally, we present a critical mean-field percolation model. One possible avenue for generalization is to consider percolation beyond the critical regime. In particular, supercritical percolation allowing for clusters with cycles may provide a framework to model overlapping compositional concepts.

\section*{Acknowledgements}

We thank Lauren Greenspan, Jennifer Lin, Andrew Mack, Nischal Mainali, Lucas Teixeira, and Dmitry Vaintrob for helpful discussions and feedback on this work. AB is supported by grants from Coefficient Giving and from the Long-Term Future Fund (EA Funds).

\bibliography{main}
\bibliographystyle{icml2026}

\newpage
\appendix
\onecolumn
\section{Terminology}\label{appendix:terminology}

Because this work ties together methods from several fields, we use field-specific terms in various sections to refer to the same objects. Table~\ref{tab:terminology} translates among them.

\begin{table}[h!]
\centering
\caption{Terminology correspondences across fields.}
\begin{tabular}{lll}
\toprule
\textbf{Field} & & \\
\midrule
Percolation    & site       & bond   \\
Graphs \& trees  & node       & edge   \\
\midrule
Data model     & data point & latent \\
\bottomrule
\end{tabular}
\label{tab:terminology}
\end{table}

\section{Model training}\label{appendix:model_training}

\subsection{Data hyperparameters}\label{appendix:hyperparameters_data}

Table~\ref{tab:data_hyperparameters} lists the hyperparameters used to generate the one-cluster and multi-cluster datasets described in Sec.~\ref{sec:experiments}. For the multi-cluster dataset, a minimum cluster size filter is applied as a post-processing step after generation, where cluster size refers to the number of points per cluster.

\begin{table}[H]
\centering
\caption{Data generation hyperparameters.}
\label{tab:data_hyperparameters}
\begin{tabular}{lcc}
\hline
\textbf{Hyperparameter} & \textbf{One-cluster} & \textbf{Multi-cluster} \\
\hline
Mode & \texttt{one\_cluster} & \texttt{distribution} \\
$n$ (dataset size) & $2\times10^5$ & $2\times10^6$ \\
$d$ (embedding dimension) & 100 & 100 \\
Graph seed & 0 & 0 \\
Embedding seed & 10{,}000 & 10{,}000 \\
Value seed & 20{,}000 & 20{,}000 \\
\hline
\multicolumn{3}{l}{\textit{Post-processing}} \\
Min.\ cluster size filter & --- & 500 \\
\hline
\end{tabular}
\end{table}

\subsection{Model hyperparameters}\label{appendix:hyperparameters_model}

Table~\ref{tab:mlp_hyperparameters} lists the full set of hyperparameters for the residual MLP models trained in Section~\ref{sec:experiments}. Both models are implemented in PyTorch and share the same architecture and optimisation scheme, differing only in model width and number of training epochs. Data is split 80/10/10 into training, validation, and test sets.

\begin{table}[H]
\centering
\caption{Neural network training hyperparameters.}
\label{tab:mlp_hyperparameters}
\begin{tabular}{lcc}
\hline
\textbf{Hyperparameter} & \textbf{One-cluster} & \textbf{Multi-cluster} \\
\hline
$d_\text{model}$ & 256 & 512 \\
$N$ (MLP blocks) & 3 & 3 \\
Expansion factor & 4 & 4 \\
Activation & ReLU & ReLU \\
Input dimension & 100 & 100 \\
Output dimension & 1 & 1 \\
\hline
Batch size & 1024 & 1024 \\
Epochs & 500 & 2500 \\
Loss & MSE & MSE \\
Optimizer & AdamW & AdamW \\
Learning rate & $10^{-4}$ & $10^{-4}$ \\
Weight decay & $0.01$ & $0.01$ \\
Scheduler & Cosine & Cosine \\
$T_\text{max}$ & 500 & 2500 \\
$\eta_\text{min}$ & $10^{-6}$ & $10^{-6}$ \\
\hline
Random seed & 42 & 42 \\
\end{tabular}
\end{table}

\section{Additional related work}\label{appendix:related_work}

\textbf{Fractals in machine learning}\quad Fractals generated using
iterated function systems are used  to create synthetic training data for image models \citep{nakamura2024scaling, baradad2021learning, kataoka2020pre}. \citet{bloem2017expectation} present a expectation-maximization algorithm for fitting a fractal model to data. \citet{malach2019deeper} study the expressivity properties of deep neural networks trained on fractal data distributions, finding that deep but not shallow networks can efficiently express these distributions. Fractal-based methods are used to estimate the intrinsic dimension of data \citep{grassberger1983measuring, karbauskaite2016fractal}. \citet{alabdulmohsin2024fractal, alabdulmohsin2025tale} argue that natural language viewed from an information-theoretic time-series perspective exhibits fractal patterns, which may be helpful for detecting LLM-generated texts.

\textbf{Synthetic data models}\quad \citet{nanda2023progress} apply modular addition as a toy model to investigate progress measures for grokking via mechanistic interpretability. \citet{allen2023physicsknowledge} introduce a synthetic dataset of uniformly distributed discrete knowledge pieces to measure knowledge storage and extraction in transformers. \citet{pan2025understanding} consider a variant of this model with power-law-distributed knowledge pieces. \citet{liu2022transformers} find theoretically and empirically that transformers can simulate finite-state automata using non-recurrent hierarchical shortcut solutions. \citet{brinkmann2024mechanistic} perform a mechanistic analysis of a transformer trained to perform a symbolic multi-step reasoning task, finding that does so using parallelized depth-bounded recurrent mechanisms.

\textbf{Percolation in deep learning}\quad \citet{lubana2024percolation} apply percolation on a bipartite graph to propose a model of emergent capabilities in transformers. \citet{liang2024diffusion} identify a connection between manifold learning in diffusion models
and continuum percolation. \citet{devlin2025dropout} analyze dropout neural network training by considering percolation among the network of weights.

\textbf{AI interpretability \& alignment}\quad The latent hierarchy proposed by the percolation model may complement research directions that aim to develop a theory of concepts \cite{wentworth2025natural, eisenstat2condensation}. A data distribution compatible with the percolation model, consisting of a heavy-tailed distribution of sparse low-dimensional clusters, can be used to study scaling laws of general intelligence \citep{brill2025model}. Developing a better understanding of the structure of data is a key element in developing a scale-aware basis for mechanistic interpretability \citep{greenspan2026towards}, advancing AI alignment \citep{lehalleur2025you}, and building towards a scientific theory of deep learning \citep{simon2026there}.

\section{Cluster statistics}\label{appendix:cluster_statistics}

In this section, we derive the cluster creation probability $p_{01}$ that, in a preferential attachment process, yields a power-law distribution consistent with critical mean-field percolation. Our derivation closely follows a master-equation method associated with preferential-attachment models in network science \citep{price1976general, barabasi1999emergence, newman2018networks}.

Let $s \ge 1$ denote a cluster's size. We will call a cluster of size $s$ an $s$-cluster. Let $n_s$ be the cluster probability mass function of $s$. For critical mean-field percolation, $n_s \sim s^{-5/2}$ for large $s$. Note that because each $s$-cluster contains $s$ sites, the probability mass function of $s$ obtained by randomly selecting sites rather than clusters is proportional to $sn_s$ rather than $n_s$.

In the following, we derive the sizes used to generate graphs that are not embedded in a lattice, and so refer to $s$ as the count of nodes (rather than sites).

Let $N$ denote the total number of active nodes that have been generated (i.e., after $N$ iterations), and write $n_s(N)$ to denote the cluster size distribution when there are $N$ active nodes.

We first consider the case $s > 1$. The number of nodes belonging to $s$-clusters after $N$ iterations is

\begin{equation}
    N s n_s(N).
\end{equation}

Adding a node to an $(s - 1)$-cluster increases the number of $s$-clusters by 1. This corresponds to increasing the number of nodes belonging to $s$-clusters by $s$. At each iteration, a node is added to an existing cluster with probability $1 - p_{01}$. If a node is added to an existing cluster, the probability that it is a node belonging to an $(s-1)$-cluster is $(s - 1)n_{s-1}(N)$. All told, the expected number of nodes belonging to $s$ clusters added at each iteration is

\begin{equation}
    (1 - p_{01})s(s - 1)n_{s-1}(N).
\end{equation}

Similarly, adding a node to an $s$-cluster decreases the number of $s$-clusters by 1, since that cluster would become an $(s+1)$-cluster. This corresponds to decreasing the number of nodes belonging to $s$-clusters by $s$. The expected number of nodes belonging to $s$-clusters removed at each iteration is therefore

\begin{equation}
    (1 - p_{01})s^2n_s(N).
\end{equation}

Putting everything together, we have the master equation for $s > 1$,

\begin{equation}\label{eq:master_eq_sgtr1}
    (N + 1) s n_s(N + 1) = N s n_s(N) + (1 - p_{01})s(s - 1)n_{s-1}(N) - (1 - p_{01})s^2n_s(N).
\end{equation}

We consider the limit of a large number of iterations, letting $N \to \infty$. Solving Eq.~\ref{eq:master_eq_sgtr1} yields the recurrence relation,

\begin{equation}\label{eq:master_eqn_sgtr1_soln}
    n_s = \frac{s -1}{s - 1 + y} n_{s-1},
\end{equation}

where $y = (2 - p_{01})/(1 - p_{01})$.

Next, we consider the case $s = 1$. The number of 1-clusters increases by 1 if a new cluster is created, which occurs with probability $p_{01}$. The number of 1-clusters decreases by 1 if a node is added to a 1-cluster, which occurs with probability $(1 - p_{01})n_1(N)$. Since each 1-cluster contains 1 node, the equation for $s=1$ is

\begin{equation}\label{eq:master_eq_seq1}
    (N + 1) n_1(N + 1) = N n_1(N) + p_{01} - (1 - p_{01})n_1(N).
\end{equation}

Solving Eq.~\ref{eq:master_eq_seq1} in the limit $N \to \infty$ gives

\begin{equation}\label{eq:master_eqn_seq1_soln}
    n_1 = \frac{p_{01}}{y(1 - p_{01})},
\end{equation}

Eq.~\ref{eq:master_eqn_sgtr1_soln} and Eq.~\ref{eq:master_eqn_seq1_soln} can be combined to give an expression for $n_s$ for general $s$. We can write this expression in terms of Gamma functions and simplify it using the beta function,

\begin{equation}
    n_s = \frac{\Gamma(s)\Gamma(y)}{\Gamma(s + y)}\frac{p_{01}}{1 - p_{01}} = B(s, y)\frac{p_{01}}{1 - p_{01}}.
\end{equation}

For large $s$, $ B(s, y) \approx s^{-y} \Gamma(y)$, giving

\begin{equation}\label{eq:ns_p01}
   n_s = \frac{p_{01}}{1 - p_{01}} \Gamma\left(\frac{2 - p_{01}}{1 - p_{01}}\right) s^{-(2 - p_{01})/(1 - p_{01})}.
\end{equation}

Equating the exponent to 5/2 gives $p_{01} = 1/3$. Substituting into Eq.~\ref{eq:ns_p01} and integrating, we obtain the complementary cumulative distribution function (CCDF),

\begin{equation}\label{eq:ccdf}
    \mathrm{CCDF} = \frac{\sqrt{\pi}}{4} s^{-3/2}.
\end{equation}

\section{Cyclic coalescent}\label{appendix:cyclic_coalescent}

The cyclic coalescent construction is underpinned by a close relation between random trees and coalescence. As shown by \citet{pitman1999coalescent}, constructing a random (labeled) tree by populating its edges one by one uniformly at random is equivalent to iteratively merging the tree components of a randomly distributed forest in a merger process known as additive coalescence. In this process, tree components merge with a rate proportional to the sum of their sizes. To make this section self-contained, we begin by restating relevant theorems from \citet{pitman1999coalescent}. These are given below with adapted notation as Theorems~\ref{thm:pitman_lemma1},~\ref{thm:pitman_lemma3}, and \ref{thm:pitman_theorem5}. An expository presentation of Theorem~\ref{thm:pitman_lemma1} can be found in \citet{aigner2014proofs}.

Consider the node set $\{1, \dots, n\}$. Identify a \textit{tree} over $\{1, \dots, n\}$ by a set of $n - 1$ edges between nodes, such that every node is connected by at least one edge. Note that this implies that a tree cannot have any cycles or self-edges. Let $\mathcal{T}_n$ be the set of all trees over $\{1, \dots, n\}$, and denote $|\mathcal{T}_n| = T_n$. Call a tree together with the choice of a particular root node a \textit{rooted tree}. Since the root could be any of $n$ nodes, there are $n T_n$ rooted trees over $\{1, \dots, n\}$.

A \textit{forest} over $\{1, \dots, n\}$ is a graph in which each connected component is a tree. Let $\mathcal{F}_{n,k}$ be the set of all forests over $\{1, \dots, n\}$ consisting of $k$ trees. Call a forest together with the choice of a root in each component tree a \textit{rooted forest}. Let $\mathcal{R}_{n,k}$ be the set of all rooted forests over $\{1, \dots, n\}$ consisting of $k$ rooted trees.

\subsection{Prerequisites}\label{sec:theorems_prereqs}

\begin{theorem}[{\citep[][1]{pitman1999coalescent}}]\label{thm:pitman_lemma1}
    For each rooted forest $r_k \in \mathcal{R}_{n,k}$, the number of rooted trees that contain $r_k$ is $N(r_k) = n^{k-1}$.
\end{theorem}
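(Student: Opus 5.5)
The plan is to double-count edge-addition sequences, following Pitman's original argument. First the notion of containment has to be fixed. A rooted tree $t$ \emph{contains} the rooted forest $r_k$ if every edge of $r_k$ is an edge of $t$. In addition, when $t$ is oriented toward its root, each component of $r_k$ must be oriented toward its own root. Equivalently, deleting the $k-1$ edges of $t$ not in $r_k$ yields exactly $r_k$, where the root of each component is the vertex closest to the root of $t$.

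Step 1 is a single-step count. From a rooted forest $r_j \in \mathcal{R}_{n,j}$ with $j \ge 2$, consider the moves that add one edge to produce a rooted forest $r_{j-1}\in\mathcal{R}_{n,j-1}$ containing $r_j$. Such a move is determined by two choices. We pick an arbitrary vertex $v$, which gives $n$ choices. We then pick a root $\rho$ of one of the $j-1$ components not containing $v$, and add the edge $\{v,\rho\}$. The component of $\rho$ is hung below $v$, and $\rho$ stops being a root. This gives exactly $n(j-1)$ moves. I would check that distinct moves give distinct forests, and that every $r_{j-1}$ containing $r_j$ arises this way. Both follow because the only root that disappears is the endpoint of the new edge nearer the leaves.

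Step 2 counts sequences. Iterating Step 1 from $r_k$ down to a rooted tree, the number of sequences $r_k \subset r_{k-1}\subset\cdots\subset r_1$ is
\begin{equation*}
\prod_{j=2}^{k} n(j-1) = n^{k-1}(k-1)!.
\end{equation*}

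Step 3 counts the same sequences from the other end. I will show that every rooted tree $t$ containing $r_k$ is the endpoint of exactly $(k-1)!$ such sequences, one for each ordering of the $k-1$ edges of $t$ outside $r_k$. The key point is that adding these edges in any order passes only through rooted forests that contain $r_k$ and are contained in $t$. This holds because the intermediate forest is a subgraph of $t$, and rooting each of its components at the vertex closest to the root of $t$ is consistent with $r_k$ at every stage. Conversely, each sequence determines its set of added edges and hence determines $t$.

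Dividing the count from Step 2 by $(k-1)!$ then gives $N(r_k)=n^{k-1}$. I expect the main obstacle to be Step 3. The delicate part is checking carefully that the root assignment of intermediate forests is forced and consistent, so that the map from sequences to (tree, ordering) pairs is a bijection. Step 1 is the mirror image of this check, and both rely on the same observation about which endpoint of an added edge is the lost root.
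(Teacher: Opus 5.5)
Your proposal is correct and follows the paper's proof essentially verbatim. Both double-count refining sequences ending in $r_k$: once as $N(r_k)(k-1)!$, by deleting the $k-1$ extra edges of a containing rooted tree in any order, and once as $\prod_{j=2}^{k} n(j-1) = n^{k-1}(k-1)!$, by attaching a root of another component to an arbitrary vertex. Your step-dependent factor $n(j-1)$ and your explicit bijectivity checks are a more careful rendering of the paper's loose phrasing (``$n(k-1)$ possibilities at each step''), but the argument is the same.
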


\begin{proof}
    We regard a rooted forest as a directed graph with the edges directed away from the roots. Say that a rooted forest $r$ \textit{contains} another rooted forest $r'$ if the directed graph of $r$ contains the directed graph of $r'$. Call a sequence of rooted forests $r_1, \dots r_k$ a \textit{refining sequence} if $r_i \in \mathcal{R}_{n,k}$ and $r_i$ contains $r_{i+1}$ for all $i$. Let $r_k \in \mathcal{R}_{n,k}$ be a fixed rooted forest. Denote by $N(r_k)$ the number of rooted trees containing $r_k$. Denote by $N^*(r_k)$ the number of refining sequences ending in $r_k$.

    First, we count $N^*(r_k)$ starting from a rooted tree and deleting edges. Suppose that $r_1 \in \mathcal{R}_{n,1}$ contains $r_k$. Then $r_1$ has $k-1$ edges that can be deleted in any order to yield a refining sequence from $r_1$ to $r_k$. This means that

    \begin{equation}\label{eq:count_one}
        N^*(r_k) = N(r_k)(k-1)!.
    \end{equation}

    Second, we count $N^*(r_k)$ starting from $r_k$ and adding edges. A rooted forest $r_{k-1}$ that refines $r_k$ can be obtained by adding a directed edge from any of the $n$ nodes to any of the $k-1$ roots of rooted trees that do not contain that node. Since there are $n(k-1)$ possibilities at each step, continuing for $k-1$ steps gives
    
    \begin{equation}\label{eq:count_two}
        N^*(r_k) = n^{k-1}(k-1)!.
    \end{equation}

    Equating Eq.~\ref{eq:count_one} and Eq.~\ref{eq:count_two}, we obtain $N(r_k) = n^{k-1}$.
\end{proof}

Note that $r_n$ is just the set of $n$ isolated nodes. The number of all rooted trees on $n$ nodes is therefore $N(r_n) = n^{n-1}$. This number is greater than the number of all trees by a factor of $n$.

\begin{corollary}[Cayley's formula]\label{thm:cayleys_formula}
    The number of trees on $n$ nodes is $T_n = n^{n-2}$.
\end{corollary}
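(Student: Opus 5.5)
The plan is to specialize Theorem~\ref{thm:pitman_lemma1} to the trivial rooted forest and then remove the choice of root by a counting argument. Take $k = n$ and let $r_n \in \mathcal{R}_{n,n}$ be the rooted forest of $n$ isolated nodes, each serving as its own root. This forest has no edges, so every rooted tree on $\{1,\dots,n\}$ contains it. Theorem~\ref{thm:pitman_lemma1} then says the number of rooted trees on $\{1,\dots,n\}$ is
\[
N(r_n) = n^{n-1}.
\]
This is exactly the remark made right after the proof of Theorem~\ref{thm:pitman_lemma1}.

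Next we relate rooted trees to unrooted ones. Consider the map that sends a rooted tree to its underlying undirected tree, forgetting the root and the edge orientations.

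\begin{itemize}
\item The map is surjective onto $\mathcal{T}_n$: any tree can be rooted at some node.
\item Each fibre has exactly $n$ elements. Given a tree $t \in \mathcal{T}_n$ and a choice of root $\rho$, there is a unique orientation of the edges of $t$ directed away from $\rho$, because paths in a tree are unique. Distinct choices of $\rho$ give distinct rooted trees.
\end{itemize}

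Hence $n T_n = n^{n-1}$, which is the count $nT_n$ already noted in the definitions. Dividing by $n$ gives $T_n = n^{n-2}$ for $n \ge 1$; for $n = 1$ this reads $T_1 = 1$.

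The only point needing care is that ``rooted tree'' in Theorem~\ref{thm:pitman_lemma1} means a directed graph with edges oriented away from the root, whereas $\mathcal{T}_n$ consists of undirected trees. We must confirm that the fibre over each tree really has size exactly $n$. This is the uniqueness of the away-from-root orientation argued above, so there is no real obstacle.
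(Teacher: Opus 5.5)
Your proposal is correct and follows the paper's route exactly: you specialize Theorem~\ref{thm:pitman_lemma1} to the trivial rooted forest $r_n$, which gives $n^{n-1}$ rooted trees, and then divide by the $n$ choices of root. Your check that each unrooted tree has exactly $n$ away-from-root orientations makes rigorous the factor of $n$ that the paper simply asserts when it defines rooted trees.
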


Cayley's formula is a famous result with many different proofs \citep[e.g.][]{prufer1918neuer, aigner2014proofs}.

We next straightforwardly apply Theorem~\ref{thm:pitman_lemma1} to count unrooted trees.

\begin{theorem}[{\citep[][3]{pitman1999coalescent}}]\label{thm:pitman_lemma3}
    For each forest $f_k \in \mathcal{F}_{n,k}$, containing trees with sizes $n_1, \dots, n_k$ such that $\sum n_i = n$, the number of trees that contain $f_k$ is
    
    \begin{equation}\label{eq:tree_count}
        N(f_k) = \left( \prod_{i=1}^k n_i \right) n^{k-2}.
    \end{equation}
\end{theorem}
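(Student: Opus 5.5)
The plan is to deduce the count from Theorem~\ref{thm:pitman_lemma1} by a double-counting argument between rooted and unrooted objects.

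First, fix the forest $f_k$ with component trees of sizes $n_1,\dots,n_k$. A rooted forest $r_k \in \mathcal{R}_{n,k}$ whose underlying forest is $f_k$ is specified by choosing one root in each component, so there are exactly $\prod_{i=1}^k n_i$ such rooted forests.

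Next, count the set $P$ of pairs $(t,\rho)$, where $t$ is an (unrooted) tree containing $f_k$ and $\rho$ is a root of $t$. There are $N(f_k)\cdot n$ such pairs. We now count $P$ a second way. Given $(t,\rho)$, orient every edge of $t$ away from $\rho$. This orients the edges of $f_k$, and within each component of $f_k$ the resulting edges point away from the unique vertex of that component closest to $\rho$. Taking those vertices as roots produces a rooted forest $r_k$ with underlying forest $f_k$ that is contained, as a directed graph, in the rooted tree $(t,\rho)$.

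Conversely, suppose a rooted tree $(t,\rho)$ contains some rooted forest $r_k$ with underlying forest $f_k$, in the directed sense. Then $r_k$ is uniquely determined, because the orientation of each edge of $f_k$ is forced by $\rho$. Hence the map $(t,\rho)\mapsto r_k$ partitions $P$ according to the rooted forests over $f_k$. The block belonging to each $r_k$ consists exactly of the rooted trees containing $r_k$, and by Theorem~\ref{thm:pitman_lemma1} it has size $n^{k-1}$. This gives
\begin{equation*}
n\,N(f_k) = \Bigl(\prod_{i=1}^k n_i\Bigr) n^{k-1},
\end{equation*}
and dividing by $n$ yields Eq.~\ref{eq:tree_count}.

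The main point requiring care is this bijection step. One must show that every rooted tree containing $f_k$ (as an undirected graph) contains exactly one of the rooted forests over $f_k$ (as a directed graph). The argument is that paths from $\rho$ in a tree are unique, so each component of $f_k$ has a unique vertex nearest $\rho$. Every edge inside that component is then directed away from this vertex, because the component is a connected subtree. With that established, the rest is arithmetic.

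As a sanity check, setting $k=n$ and all $n_i=1$ recovers Cayley's formula, Corollary~\ref{thm:cayleys_formula}.
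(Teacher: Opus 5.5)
Your proof is correct and uses the same double-counting argument as the paper: you count the rooted trees containing $f_k$ once as $n\,N(f_k)$, and once as $\bigl(\prod_i n_i\bigr) n^{k-1}$ by summing over root choices in each component and applying Theorem~\ref{thm:pitman_lemma1}. You also verify that each rooted tree containing $f_k$ contains exactly one rooted forest over $f_k$, with roots at the vertices nearest $\rho$, which spells out a step the paper's proof leaves implicit.
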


\begin{proof}
    Let $f_k \in \mathcal{F}_{n,k}$ be a fixed forest. We count the rooted trees that contain $f_k$ (ignoring edge directions) in two ways. First, there are $n$ ways to choose a tree's root, giving us $n N(f_k)$ rooted trees. Second, choosing a root for each tree component of $f_k$ and applying Theorem~\ref{thm:pitman_lemma1} gives us $\left(\prod_i n_i \right) n^{k-1}$ rooted trees. Equating these expressions yields the above result.
\end{proof}

We now observe a relation between edge deletion in a uniform random tree, an additive coalescent process among trees, and a uniform random forest.

\begin{theorem}[{\citep[][5]{pitman1999coalescent}}]\label{thm:pitman_theorem5}
    The following three descriptions (i), (ii), and (iii) for the distribution of a sequence of forests $(F_1, \dots, F_n)$ are equivalent, and imply that, for each $1 \leq k \leq n$ and for each  $f_k \in \mathcal{F}_{n,k}$ with tree components of size $n_1, \dots, n_k$ in some arbitrary order, 
    \begin{equation}
    P(F_k = f_k) = \frac{\prod_{i=1}^k n_i}{n^{n-k}\binom{n-1}{k-1}}
    \label{eq:Fk-dist}
    \end{equation}
    \begin{itemize}
      \item[(i)] $F_1$ is a uniform random tree in $\mathcal{T}_n$, and indexing the edges $e_j$ of $F_1$ by a uniform random permutation of $1, \dots, n - 1$, for each $1 \leq k \leq n$ the forest $F_k$ is derived from $F_1$ by deleting the edges $e_1, \dots, e_{k-1}$;
      
      \item[(ii)] $F_n$ is the trivial forest, and for $n \geq k \geq 2$, given $(F_n, \dots, F_k)$ where $F_k$ has $k$ tree components $T_i, \dots, T_k$ with sizes $n_1, \dots, n_k$ such that $\sum_{i=1}^k n_i = n$, the forest $F_{k-1} \in \mathcal{F}_{n,k-1}$ is obtained by adding an edge $\{a,b\}$ to $F_k$, chosen according to the following rule: first pick trees $T_i$ and $T_j$, where $1 \leq i < j \leq k$, with probability
      \begin{equation}
        P(i, j) = \frac{n_i + n_j}{n(k-1)},
        \label{eq:additive-rule}
      \end{equation}
      then pick $a$ and $b$ independently and uniformly at random from $T_i$ and $T_j$ respectively;
      
      \item[(iii)] The sequence $(F_1, \dots, F_n)$ has uniform distribution over the set of all $(n-1)!~n^{n-2}$ refining sequences of forests $(f_1, \dots, f_n)$ such that $f_k \in \mathcal{F}_{n,k}$ for every $1 \leq k \leq n-1$.
    \end{itemize}
\end{theorem}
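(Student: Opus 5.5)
The plan is to take (iii) as the hub: show that both (i) and (ii) produce the uniform distribution on refining sequences, and then read off Eq.~\ref{eq:Fk-dist} by counting. First I count the refining sequences. A refining sequence $(f_1,\dots,f_n)$ is the same thing as a tree $f_1\in\mathcal{T}_n$ together with an ordering of its $n-1$ edges, where $f_k$ is $f_1$ with the first $k-1$ edges deleted. By Corollary~\ref{thm:cayleys_formula} there are $(n-1)!\,n^{n-2}$ of them. Under (i), $F_1$ is uniform on $n^{n-2}$ trees and the edge order is an independent uniform permutation. So every refining sequence has probability $1/((n-1)!\,n^{n-2})$, which gives (i)$\Leftrightarrow$(iii).

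For (ii)$\Leftrightarrow$(iii), note that (ii) builds the sequence backward from $F_n$ by adding one edge at a time, so its law is the product of the one-step transition probabilities. Fix $f_k$ with component sizes $n_1,\dots,n_k$ and a specific edge $\{a,b\}$ with $a\in T_i$, $b\in T_j$. Under (ii) this edge is added with probability
\[
\frac{n_i+n_j}{n(k-1)}\cdot\frac{1}{n_in_j}.
\]
Under the uniform law (iii), I compute the reverse-time transition probability $P(F_{k-1}=f_{k-1}\mid F_k=f_k,\dots,F_n)$. The number of refining sequences with a given tail $(f_k,\dots,f_n)$ is (number of trees containing $f_k$)$\times(k-1)!$, which by Theorem~\ref{thm:pitman_lemma3} equals $(\prod_m n_m)\,n^{k-2}(k-1)!$. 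The transition probability is therefore the ratio of this count for $f_{k-1}$ to the count for $f_k$. Merging $T_i$ and $T_j$ replaces $n_in_j$ by $n_i+n_j$, so the ratio is
\[
\frac{(n_i+n_j)\prod_{m\ne i,j}n_m\; n^{k-3}(k-2)!}{\prod_m n_m\; n^{k-2}(k-1)!}=\frac{n_i+n_j}{n_in_j\,n(k-1)}.
\]
This matches (ii). The key point is that it depends only on $f_k$ and not on the later forests, so both processes are Markov chains with identical transitions from the same start $F_n$, and hence have the same law. I expect this step to be the main obstacle: the Markov/conditioning structure must be stated carefully, and Theorem~\ref{thm:pitman_lemma3} is exactly what makes the count factor.

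Finally, for Eq.~\ref{eq:Fk-dist}, under (iii) I have
\[
P(F_k=f_k)=\frac{\#\{\text{sequences through } f_k\}}{(n-1)!\,n^{n-2}}.
\]
The numerator is (trees containing $f_k$)$\times(k-1)!$ choices for ordering the deleted edges $\times\,(n-k)!$ orderings of the remaining $n-k$ edges of $f_k$. Substituting Theorem~\ref{thm:pitman_lemma3} gives
\[
\frac{\prod n_i\; n^{k-2}(k-1)!(n-k)!}{(n-1)!\,n^{n-2}}=\frac{\prod n_i}{n^{n-k}\binom{n-1}{k-1}},
\]
which is Eq.~\ref{eq:Fk-dist}.
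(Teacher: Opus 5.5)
Your proof is correct and follows essentially the same route as the paper: (i)$\Leftrightarrow$(iii) by identifying refining sequences with a tree plus an ordering of its edges via Cayley's formula, and (ii) matched by showing that the reverse one-step transition equals $\frac{n_i+n_j}{n(k-1)}\frac{1}{n_in_j}$, using the count from Theorem~\ref{thm:pitman_lemma3}. The difference is only in packaging. The paper obtains this transition by Bayes' rule from the marginals in Eq.~\ref{eq:Fk-dist} and the forward deletion probability $1/(n-k+1)$, and it simply asserts the Markov property. Your ratio of completion counts gives the same number while showing explicitly that the conditional given the whole tail depends only on $f_k$. You also treat (i)$\Leftrightarrow$(iii) at the level of the full sequence law, where the paper argues through the marginal of $F_k$.
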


\begin{proof}
    In both descriptions (i) and (iii), the forest $F_k$ is determined by the choice of a tree $t \in T_n$ and a subset of $k-1$ edges of $t$. It follows from Cayley's formula \ref{thm:cayleys_formula} that in both cases there is a total number of $n^{n-2}\binom{n-1}{k-1}$ equally likely choices, so descriptions (i) and (iii) are equivalent. The probability that $F_k = f_k$ equals the number of trees that contain $f_k$  (Eq.~\ref{eq:tree_count}) divided by the total number, giving the probability shown in Eq.~\ref{eq:Fk-dist}.

    Next, we show that description (ii) is equivalent to (i). The process is Markov, so it suffices to consider the conditional probability that $F_{k-1} = f_{k-1}$ given $F_k = f_k$. One approach is as follows \citep{sheth1997coagulation}. Using Bayes' rule,

    \begin{equation}
        P(F_{k-1} = f_{k-1}~|~F_k = f_k) = \frac{P(F_{k-1} = f_{k-1})}{P(F_k = f_k)} P(F_k = f_k~|~F_{k-1} = f_{k-1}).
    \end{equation}
    
    From Eq.~\ref{eq:Fk-dist} and the fact that in (i), $f_k$ is obtained from $f_{k-1}$ by deleting one of $n - (k - 1)$ edges uniformly at random, we have

    \begin{align}
        P(F_{k-1} = f_{k-1}~|~F_k = f_k) &= \frac{n^{n-k}\binom{n-1}{k-1}(n_i+n_j)\prod_{l \notin \{i, j\}}n_l}{n^{n-(k-1)}\binom{n-1}{k-2}\prod_l n_l}\frac{1}{n - (k-1)}\nonumber\\
        &= \frac{(n-(k-1))(n_i + n_j)}{n(k-1) n_i n_j}\frac{1}{n - (k-1)}\nonumber\\
        &= \frac{n_i + n_j}{n(k-1)} \frac{1}{n_i} \frac{1}{n_j}.\label{eq:conditional_probability}
    \end{align}

    This conditional probability is satisfied by the procedure in description (ii).
\end{proof}

\subsection{Cyclic coalescent}\label{sec:theorems_cyclic}

We now build on these prior results to show that a distinct construction, the cyclic coalescent, is equivalent to the descriptions in Theorem~\ref{thm:pitman_theorem5}. This implies that running the cyclic coalescent to completion produces a uniform random tree. Moreover, such an algorithm can be implemented in almost linear time, as discussed in the main text.

\begin{theorem}[Cyclic coalescent]\label{thm:cyclic_coalescent}
    The following description for the distribution of a sequence of forests $(F_1, \dots F_n)$ is equivalent to the descriptions in Theorem~\ref{thm:pitman_theorem5}.
    
    $F_n$ is the trivial forest, and arranging the tree components (nodes) of $F_n$ in a cycle indexed by a uniform random permutation of $1, \dots, n$, for $n \ge k \ge 2$ and given $(F_n, \dots, F_k)$ where $F_k$ has $k$ tree components $T_i, \dots, T_k$ with sizes $n_1, \dots, n_k$ such that $\sum_{i=1}^k n_i = n$, the forest $F_{k-1} \in \mathcal{F}_{n,k-1}$ is obtained by adding an edge $\{a, b\}$ to $F_k$, chosen according to the following rule: first pick the node $a$ uniformly at random from $F_k$, then, denoting the tree containing this first-chosen node as $T_i$ and its successor tree as $T_j$, pick the node $b$ uniformly at random from $T_j$.
\end{theorem}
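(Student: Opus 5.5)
The plan is to show that the cyclic coalescent, viewed only through its forest sequence $(F_n,\dots,F_1)$, is a Markov chain with exactly the transition probabilities of description (ii) of Theorem~\ref{thm:pitman_theorem5}. Since both processes start from the trivial forest $F_n$, equality of transition kernels gives equality of the joint laws of $(F_n,\dots,F_1)$. The difficulty is that the cyclic coalescent carries extra state, namely the cyclic order of the blocks. I will therefore prove the transitions together with an invariant about this hidden order.

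\textbf{Invariant.} For each $k$, conditional on the full history $(F_n,\dots,F_k)$, the cyclic order of the $k$ tree components of $F_k$ is uniformly distributed over all $(k-1)!$ cyclic orders of those components. I will prove this by downward induction on $k$, alongside the transition computation.

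\textbf{Base case and transition.} For $k=n$ the invariant holds because a uniform random permutation of $1,\dots,n$ induces a uniform cyclic order. Now fix $F_k=f_k$ and two components $T_i,T_j$ of sizes $n_i,n_j$, with $a\in T_i$ and $b\in T_j$. The edge $\{a,b\}$ is added in exactly one of two disjoint ways.
\begin{itemize}
\item[(1)] $T_j$ is the successor of $T_i$, node $a$ is drawn first, and then $b$ is drawn from $T_j$. This has probability $\frac{1}{k-1}\cdot\frac1n\cdot\frac1{n_j}$.
\item[(2)] $T_i$ is the successor of $T_j$, node $b$ is drawn first, and then $a$ is drawn from $T_i$. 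This has probability $\frac{1}{k-1}\cdot\frac1n\cdot\frac1{n_i}$.
\end{itemize}
In both cases the factor $\frac1{k-1}$ is the probability, under a uniform cyclic order of $k$ blocks, that a given block immediately follows another given block; this is where the invariant is used. Summing the two cases gives
\begin{equation*}
\frac{1}{n(k-1)}\left(\frac1{n_i}+\frac1{n_j}\right)=\frac{n_i+n_j}{n(k-1)}\frac1{n_i}\frac1{n_j},
\end{equation*}
which is exactly Eq.~\ref{eq:conditional_probability}, the one-step kernel of description (ii). The case $k=2$ is consistent: each block is the other's successor, so both adjacency probabilities equal $1=\frac1{k-1}$.

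\textbf{Propagating the invariant (main obstacle).} I must show that after the merge, the cyclic order of the $k-1$ new blocks is uniform conditional on $(F_n,\dots,F_{k-1})$. I would first condition on the finer information of which orientation occurred, say case (1), where $T_j$ succeeds $T_i$.

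Given the history, the old cyclic order is uniform. The choice of $a$ and $b$ depends on the order only through the event that $T_j$ succeeds $T_i$. Hence, conditional on that event and on the chosen nodes, the old order is uniform over the $(k-2)!$ cyclic orders in which $T_i$ is immediately followed by $T_j$.

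Contracting the adjacent pair $T_iT_j$ into the single merged block is a bijection from these $(k-2)!$ orders onto all cyclic orders of the $k-1$ new blocks. So the new order is uniform. Case (2) is identical with the roles of $i$ and $j$ swapped.

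Since the conditional law is the same uniform law in both cases, it remains uniform after averaging over the orientation. This closes the induction. Combined with the matching kernels, it shows that the cyclic coalescent forest sequence has the law of description (ii), and hence is equivalent to descriptions (i) and (iii) by Theorem~\ref{thm:pitman_theorem5}.
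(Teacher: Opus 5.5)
Your proof is correct and follows the paper's route: the two disjoint orientations contribute $\frac{1}{k-1}\frac{1}{n}\frac{1}{n_j}$ and $\frac{1}{k-1}\frac{1}{n}\frac{1}{n_i}$, and their sum matches the kernel in Eq.~\ref{eq:conditional_probability}. Your downward-induction invariant, that the cyclic order is uniform given the forest history and stays uniform under contraction of the adjacent pair, adds real rigor: the paper just asserts the adjacency probability $1/(k-1)$ and conditions only on $F_k=f_k$, so it leaves implicit why the hidden order stays uniform and why the forest sequence is Markov.
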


\begin{proof}
    Consider the conditional probability $F_{k-1} = f_{k-1}$ given $F_k = f_k$, where $f_{k-1}$ differs from $f_k$ only by adding the edge $\{a, b\}$. The edge $\{a, b\}$ can be added iff $a \in T_i$ and $b \in T_j$ or $b \in T_i$ and $a \in T_j$. There are $k$ trees, so both arrangements occur with equal probability $1/(k-1)$. Then because the first-chosen node is uniformly distributed among $F_k$ and among $T_i$, and the second-chosen node is uniformly distributed among $T_j$, the total probability to choose $\{a, b\}$ is

    \begin{align}
        P(F_{k-1} = f_{k-1}~|~F_k = f_k) &=
        \frac{1}{k-1}\frac{n_i}{n}\frac{1}{n_i}\frac{1}{n_j} + \frac{1}{k-1}\frac{n_j}{n}\frac{1}{n_j}\frac{1}{n_i}\nonumber\\
        &= \frac{n_i + n_j}{n(k-1)}\frac{1}{n_i}\frac{1}{n_j}.
    \end{align}

    This matches the conditional probability given by Eq.~\ref{eq:conditional_probability}.
\end{proof}

\begin{corollary}
    The cyclic coalescent after $n - 1$ steps produces a uniform random tree in $\mathcal{T}_n$. 
\end{corollary}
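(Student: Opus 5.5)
The corollary follows by chaining Theorem~\ref{thm:cyclic_coalescent} with Theorem~\ref{thm:pitman_theorem5}. The plan is to show that the sequence of forests $(F_n, F_{n-1}, \dots, F_1)$ produced by the cyclic coalescent has the same joint law as the sequence in description (ii). Description (ii) is equivalent to description (i), in which $F_1$ is by definition a uniform random tree in $\mathcal{T}_n$. Since $F_1$ is exactly the output after $n-1$ merge steps (each step reduces the number of components by one, from $n$ to $1$), this gives the claim. One can also read it off directly from Eq.~\ref{eq:Fk-dist} with $k=1$: $P(F_1 = f_1) = 1/(n^{n-1}\binom{n-1}{0}) \cdot n = 1/n^{n-2}$, which is uniform by Cayley's formula (Corollary~\ref{thm:cayleys_formula}).

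The key step is upgrading the one-step conditional computation in the proof of Theorem~\ref{thm:cyclic_coalescent} to equality of joint laws. Both processes start from the same deterministic trivial forest $F_n$. If the one-step transition probabilities $P(F_{k-1}=f_{k-1}\mid F_n,\dots,F_k)$ agree at every step, then by the chain rule the joint distributions of $(F_n,\dots,F_1)$ agree.

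The subtlety, and what I expect to be the main obstacle, is that the cyclic coalescent is \emph{not} Markov in the forests alone. It carries the auxiliary cyclic order of blocks, so its transition depends on that order. I would handle this by proving, by induction on the step, a stronger invariant. Conditional on the forest history $(F_n,\dots,F_k)$, the cyclic order of the $k$ current blocks is uniformly distributed over the $(k-1)!$ cyclic arrangements. This holds at $k=n$ by the random initial permutation.

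Given the invariant, the pair of blocks that are adjacent in a given orientation $(T_i \to T_j)$ has probability $1/(k-1)$, which is exactly the factor used in the proof of Theorem~\ref{thm:cyclic_coalescent}. That computation then gives the additive-coalescent transition (Eq.~\ref{eq:conditional_probability}) conditional on the full forest history.

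For the inductive step, I would note two facts. First, given the forest history and the cyclic order, merging $T_i$ into its successor yields a cyclic order on $k-1$ blocks obtained by contracting an adjacent pair. Second, conditioning on which unordered pair $\{T_i,T_j\}$ merged, the remaining $k-2$ blocks together with the merged block are still in a uniform cyclic arrangement. This follows by symmetry: relabelling the other blocks preserves the uniform prior and the likelihood of the observed merge. That symmetry argument is where care is needed. Once it is in place, the induction closes, the joint laws match description (ii), and the corollary follows.
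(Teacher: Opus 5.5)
Your proposal is correct and follows the paper's route: the paper's proof of this corollary is exactly the one-line appeal to Theorem~\ref{thm:cyclic_coalescent} combined with description (i) of Theorem~\ref{thm:pitman_theorem5}. Your inductive invariant adds something the paper lacks. The invariant says the cyclic order of the current blocks is uniform given the forest history, and it holds because the merge likelihood depends on the order only through the orientation of the merging pair, while contraction maps each orientation class bijectively onto the $(k-2)!$ cyclic orders of the new blocks. It justifies the step ``both arrangements occur with equal probability $1/(k-1)$'', which the paper's proof of Theorem~\ref{thm:cyclic_coalescent} takes for granted even though the forest process alone is not Markov.
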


This follows from description (i) in Theorem~\ref{thm:pitman_theorem5}.

\begin{corollary}
    The cyclic coalescent after $k$ steps produces a uniform distribution over forests $f_k \in \mathcal{F}_{n,k}$. 
\end{corollary}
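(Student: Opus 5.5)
The plan is to reduce the claim to Theorem~\ref{thm:pitman_theorem5} through Theorem~\ref{thm:cyclic_coalescent}, and to make precise what ``uniform'' means here. By Eq.~\ref{eq:Fk-dist}, the marginal law of a $k$-component forest is proportional to $\prod_i n_i$, so it is \emph{not} the counting measure on $\mathcal{F}_{n,k}$. The correct reading is the one in description (iii). After $n-k$ merge steps, the forest $F_k$ is distributed as a uniformly chosen pair consisting of a tree $t\in\mathcal{T}_n$ and a uniformly chosen set of $k-1$ of its edges, with $F_k$ equal to $t$ minus those edges. Equivalently, $F_k$ is the $k$-th entry of a uniformly random refining sequence. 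I would state this interpretation explicitly, with the step count taken as $n-k$ so that $F_k$ has $k$ components, and then prove that the cyclic coalescent's $F_k$ has exactly the law in Eq.~\ref{eq:Fk-dist}.

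\textbf{Step 1: transition kernel.} I will show that the sequence $(F_n,F_{n-1},\dots)$ produced by the cyclic coalescent is a Markov chain on forests whose one-step kernel is Eq.~\ref{eq:conditional_probability}. The computation in the proof of Theorem~\ref{thm:cyclic_coalescent} gives this kernel, but only if, conditional on the forest history $(F_n,\dots,F_k)$, the cyclic order of the $k$ current blocks is uniform over the $(k-1)!$ cyclic arrangements. That assumption is what justifies the factor $1/(k-1)$ for ``$T_j$ is the successor of $T_i$''. I will prove it by induction on the number of steps.
\begin{itemize}
\item At step $n$ the claim holds because the initial cycle is a uniform random permutation.
\item For the inductive step, note that the choice of $a$ depends only on block sizes, and $b$ is drawn inside the successor block. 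Hence, given the history and the merged pair $\{T_i,T_j\}$, the posterior over the old cyclic order is uniform over all arrangements in which $T_j$ immediately follows $T_i$.
\item Contracting the adjacent pair $T_i,T_j$ to one block maps these arrangements bijectively onto the cyclic arrangements of the $k-1$ new blocks.
\item Each direction of the unordered pair $\{T_i,T_j\}$ contributes symmetrically, as in the two-term sum in the proof of Theorem~\ref{thm:cyclic_coalescent}.
\item Therefore the new cyclic order is again uniform, and independent of which edge was added.
\end{itemize}
I expect this exchangeability step to be the main obstacle. Its subtlety is that the cyclic order is hidden state, and the forest sequence is Markov only after that hidden state is integrated out.

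\textbf{Step 2: marginal law.} Given Step~1, the forest chain has the same initial state (the trivial forest $F_n$) and the same kernel as description (ii) of Theorem~\ref{thm:pitman_theorem5}. The joint laws of $(F_n,\dots,F_1)$ therefore coincide. By the equivalence of (ii) and (iii), the forest sequence is uniform over the $(n-1)!\,n^{n-2}$ refining sequences. Its $k$-th marginal is Eq.~\ref{eq:Fk-dist}, which is the uniform measure on (tree, $k-1$-edge subset) pairs pushed forward to forests. As a sanity check, I would also verify directly by induction on $k$ that Eq.~\ref{eq:Fk-dist} is preserved by the kernel Eq.~\ref{eq:conditional_probability}. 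This amounts to summing $P(F_k=f_k)\,\frac{n_i+n_j}{n(k-1)n_in_j}$ over all $f_k$ and all split edges that produce a given $f_{k-1}$, which reduces to the Bayes identity already used in the proof of Theorem~\ref{thm:pitman_theorem5}.
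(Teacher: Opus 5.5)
Your proposal is correct and follows the paper's route. The paper's entire proof of this corollary is your Step~2: once Theorem~\ref{thm:cyclic_coalescent} matches the kernel of description (ii), the claim is read off from description (iii) of Theorem~\ref{thm:pitman_theorem5}. Your additions refine that route rather than replace it:

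\begin{itemize}
\item You rightly note that the literal statement is false. By Eq.~\ref{eq:Fk-dist}, for $n=4$, $k=2$ each $(2,2)$-forest has probability $1/12$ while each $(3,1)$-forest has probability $1/16$.
\item You rightly note that the step count should be $n-k$, not $k$. The provable content is that $F_k$ has the law of a uniform refining sequence pushed forward to its $k$-th entry.
\item Your induction on the hidden cyclic order supplies the justification for the factor $1/(k-1)$, which the paper's proof of Theorem~\ref{thm:cyclic_coalescent} simply asserts.
\end{itemize}

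One precision for that induction: the two merge directions receive unequal posterior weights, $n_i/(n_i+n_j)$ versus $n_j/(n_i+n_j)$, so ``contributes symmetrically'' should not be read as equal weight. This does not affect the conclusion, because each direction class contracts bijectively onto the new cyclic arrangements, and any mixture of two uniform laws on the same set is uniform.
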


This follows from description (iii) in Theorem~\ref{thm:pitman_theorem5}.

\section{Additional probing experiments}\label{appendix:additional_experiments}

Fig.~\ref{fig:sum_zu_MSE_probes} shows the results of linear probing experiments on the partial cumulative sums of latent values.

\begin{figure}[htbp]
    \centering
    \begin{subfigure}[b]{0.45\textwidth}
        \centering
        \includegraphics[trim={0.2cm 0.3cm 2cm 2cm}, clip, width=\textwidth]{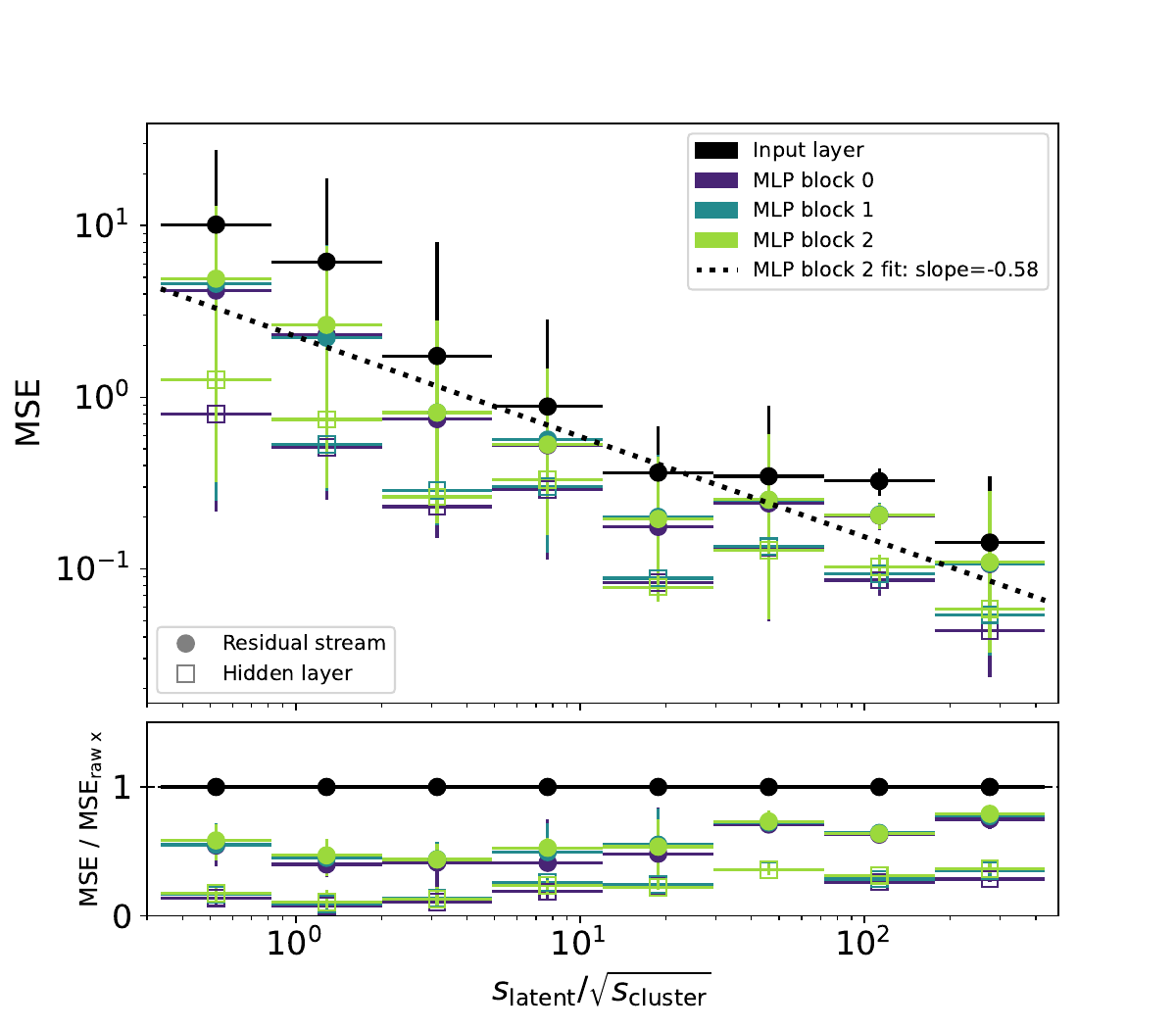}
        \caption{}
        \label{subfig:1clusterMSE_sumzu}
    \end{subfigure}
    \hfill
    \begin{subfigure}[b]{0.45\textwidth}
        \centering
        \includegraphics[trim={0.2cm 0.3cm 2cm 2cm}, clip, width=\textwidth]{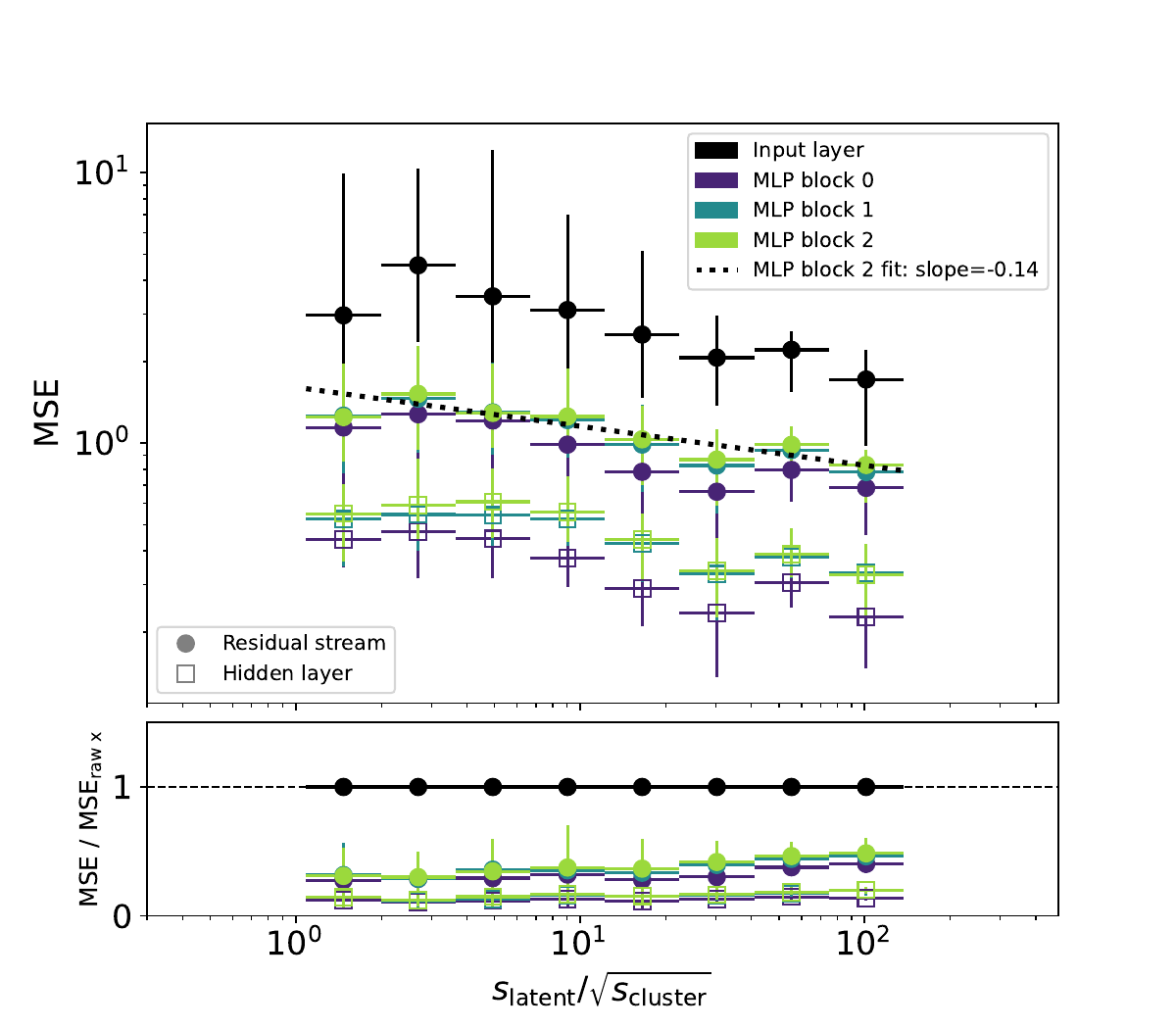}
        \caption{}
        \label{subfig:multiclusterMSE_sumzu}
    \end{subfigure}
    \caption{Per-latent linear probe performance for partial cumulative sums of latent values $z_i$, showing (a) the one-cluster dataset and (b) the multi-cluster dataset. Dots (squares) mark the median MSE for the residual stream (hidden activations). Error bars show 25th and 75th percentiles. The ratio panels show each layer's per-latent MSE normalized by the MSE of probes trained on the raw input.}
    \label{fig:sum_zu_MSE_probes}
\end{figure}

\section{Additional validation plots}

Fig.~\ref{fig:ccdf_clustersize} shows the complementary cumulative distribution function (CCDF) for the multi-cluster dataset described in Sec.~\ref{sec:experiments}. The fitted slope is consistent with the theoretical expectation of $3/2$ as derived in Eq.~\ref{eq:ccdf}.

\begin{figure}[htbp]
    \centering
    \includegraphics[width=0.8\linewidth]{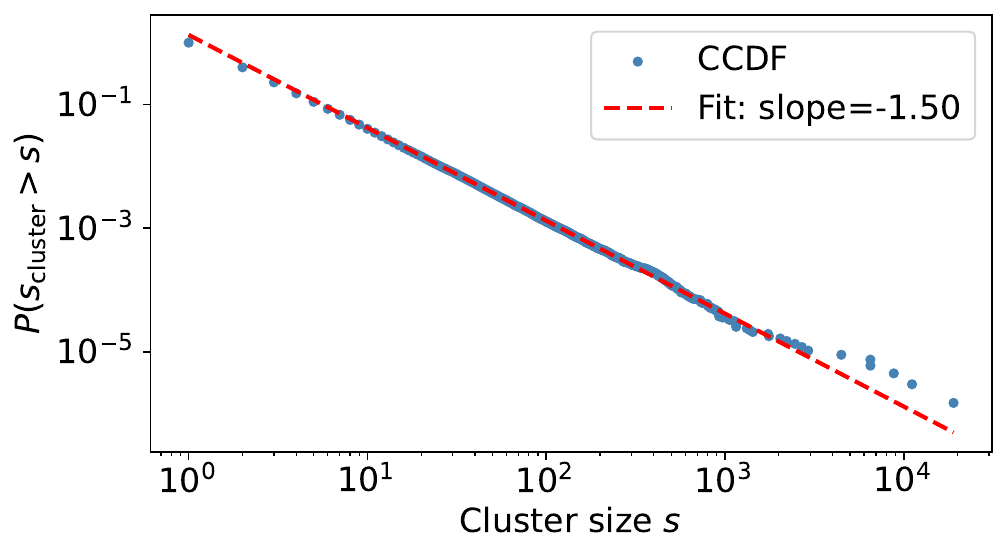}
    \caption{Cluster size distribution. A maximum-likelihood power-law fit to clusters with $s \geq 35$ is shown as a dashed red line. Each cluster (not data point) is counted once in the distribution.}
    \label{fig:ccdf_clustersize}
\end{figure}

Fig.~\ref{fig:correctness_validation} shows plots validating the correctness of the cyclic coalescent algorithm. A well-known bijection, called a Pr\"ufer sequence \citep{prufer1918neuer}, exists between the set of $n^{n-2}$ labeled trees on $n$ nodes and the set of sequences of length $n-2$ on the labels 1 to $n$. This bijection permits a powerful check of uniformity tractable at small tree sizes. Fig.~\ref{fig:prufer_correctness} confirms that trees of size $n=6$ generated by the cyclic coalescent have a uniform distribution when coded as Pr\"ufer sequences. As another check, Fig.~\ref{fig:degree_distribution} validates that a large tree of size $10^6$ has the theoretically expected degree distribution $1 + \mathrm{Poisson}(1)$.

\begin{figure}[htbp]
    \centering
    \begin{subfigure}[t]{0.48\textwidth}
        \centering
        \includegraphics[width=\textwidth]{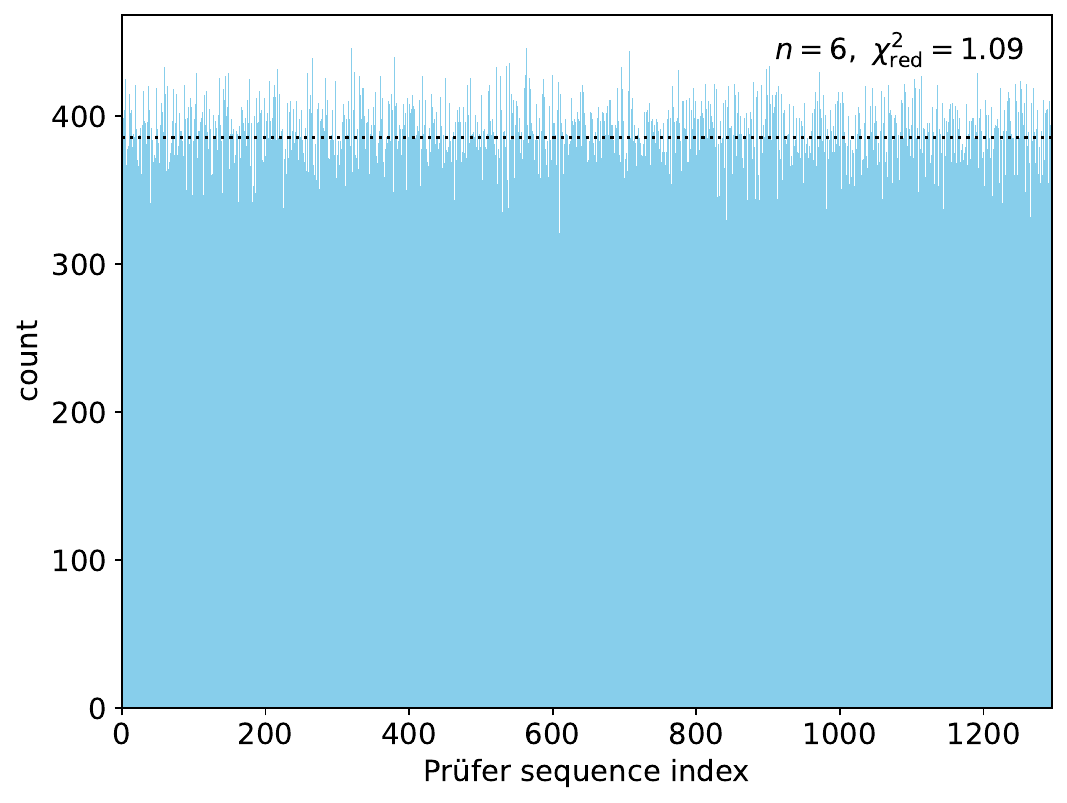}
        \caption{Distribution over Pr\"ufer sequences.}
        \label{fig:prufer_correctness}
    \end{subfigure}
    \hfill
    \begin{subfigure}[t]{0.48\textwidth}
        \centering
        \includegraphics[width=\textwidth]{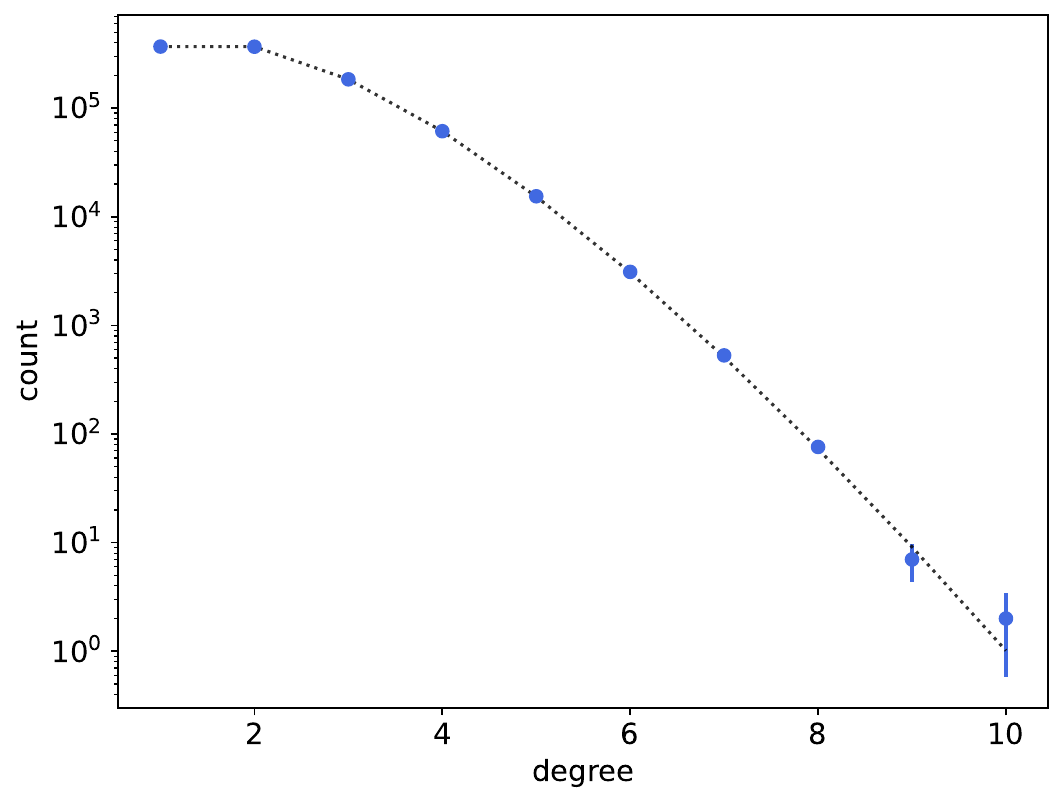}
        \caption{Degree distribution.}
        \label{fig:degree_distribution}
    \end{subfigure}
    \caption{Correctness validation for the cyclic coalescent. Panel~(\subref{fig:prufer_correctness}) shows that 500,000 trees generated with the cyclic coalescent have a uniform distribution over the 1296 Pr\"ufer sequences for size $n=6$. Panel~(\subref{fig:degree_distribution}) shows the degree distribution of a tree of size $10^6$. Measured counts are shown in blue with Poisson error bars. The empirical counts match the theoretical distribution, shown with a black dotted line.}
    \label{fig:correctness_validation}
\end{figure}

In Fig.~\ref{fig:timing_validation}, we show timing measurements to benchmark the empirical performance of the cyclic coalescent algorithm implemented in Python. As shown in Fig.~\ref{fig:time_measurement}, the measured time to generate a tree of size $n$ scales slightly superlinearly, with a measured slope of about 1.1, rather than 1 as expected for a linear-time algorithm. This observation is consistent with $O(n\log n)$ scaling. We believe that the theoretical time complexity is obscured in practice by details of the Python implementation, rather than a fundamental issue. Specifically, the tree is constructed by iteratively inserting edges into adjacency dictionaries associated with each node. As the tree is built up using random-access edge insertions, an extraneous logarithmic slowdown may result due to cache misses among these heap-allocated dictionaries.

To see this, we compare the performance to two baselines with known $O(n)$ time complexity. First, we compare to \texttt{random\_labeled\_tree()} from the \texttt{networkx} library \citep{SciPyProceedings_11}, which constructs a random tree from a Pr\"ufer sequence using the optimal $O(n)$ algorithm from \citep{wang2009optimal}. Second, we compare to a trivial baseline with transparently linear algorithmic time complexity, shown below as the function \texttt{build\_random\_graph()}.

\begin{lstlisting}[style=py]
import networkx as nx
import numpy as np

def build_random_graph(n: int, rng: np.random.Generator):
    G = nx.empty_graph(n)
    edges = rng.integers(0, n, size=(n - 1, 2))
    for u, v in edges:
        G.add_edge(int(u), int(v))
    return G
\end{lstlisting}

As can be seen in Fig.~\ref{fig:time_measurement}, the slopes of all three methods are consistent within error. The reported error bars are underestimates, as they do not incorporate variation in measured wall-clock time from run to run. In Fig.~\ref{fig:time_ratio}, we show the ratio of the measured time of the cyclic coalescent to the \texttt{build\_random\_graph()} baseline. The ratios appear roughly constant and have no apparent trend, consistent with equivalent to linear performance. Running the cyclic coalescent takes about $60\%$ longer than a pure graph-construction baseline. This is expected because the cyclic coalescent constructs an additional binary coalescence tree containing $O(n)$ nodes alongside the random tree.

\begin{figure}[htbp]
    \centering
    \begin{subfigure}[t]{0.48\textwidth}
        \centering
        \includegraphics[width=\textwidth]{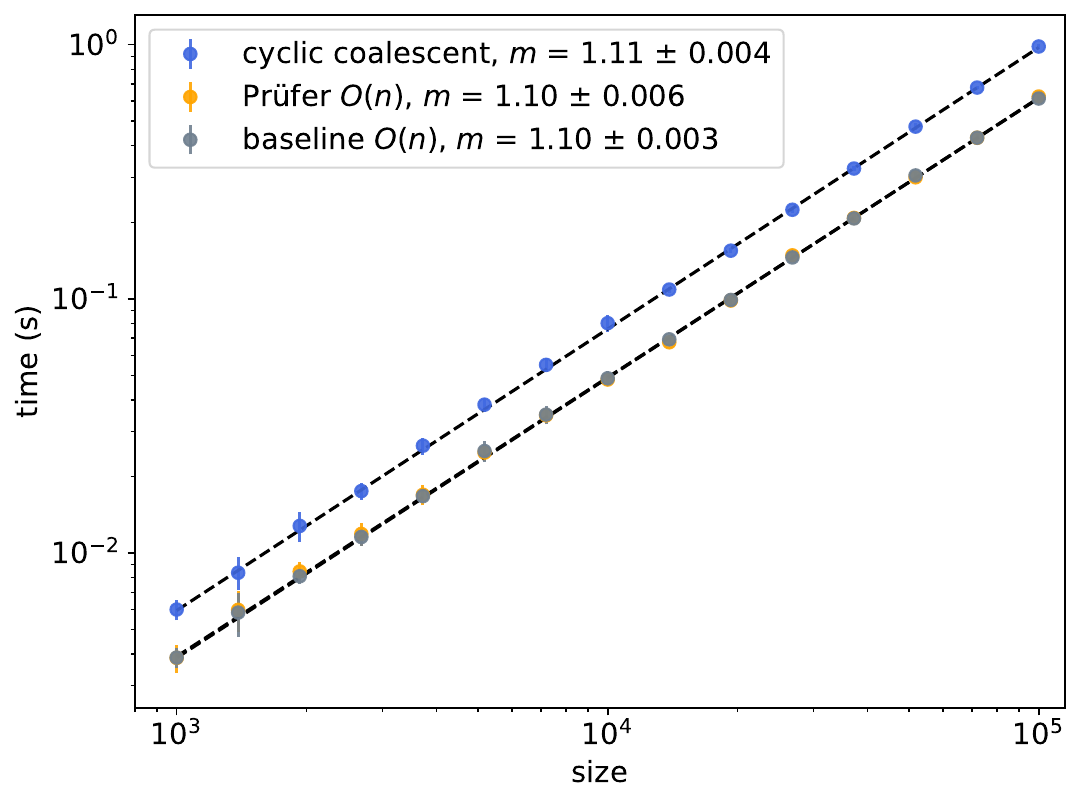}
        \caption{Absolute time required to generate one cluster.}
        \label{fig:time_measurement}
    \end{subfigure}
    \hfill
    \begin{subfigure}[t]{0.48\textwidth}
        \centering
        \includegraphics[width=\textwidth]{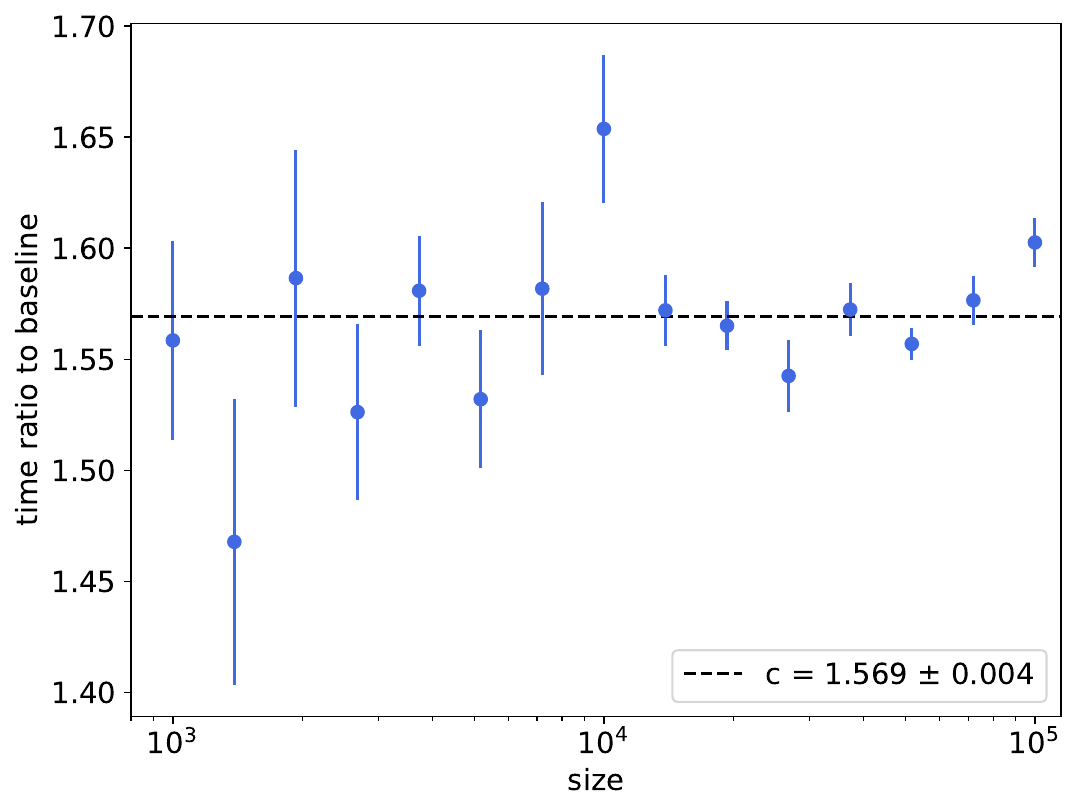}
        \caption{Relative time required to generate one cluster.}
        \label{fig:time_ratio}
    \end{subfigure}
    \caption{Performance validation for the cyclic coalescent. Panel~(\subref{fig:time_measurement}) shows the time required to generate a random tree as a function of tree size for the cyclic coalescent and two $O(n)$ baseline methods, described in the text. The data points show the mean and standard deviation over 20 trials. The black dashed lines show linear fits. The slopes are consistent among the methods. Panel~(\subref{fig:time_ratio}) shows the performance ratio of the cyclic coalescent to the linear baseline \texttt{build\_random\_graph()} as a function of tree size. No clear trend is apparent. The black dashed line shows a constant fit.}
    \label{fig:timing_validation}
\end{figure}

\end{document}